\theoremstyle{plain}
\newtheorem{theorem}{Theorem}
\newtheorem{lemma}{Lemma}
\newtheorem{proposition}{Proposition}
\theoremstyle{definition}
\newtheorem{assumption}{Assumption}
\theoremstyle{remark}
\crefname{theorem}{theorem}{theorems}
\crefname{lemma}{lemma}{lemmas}
\crefname{definition}{definition}{definitions}
\crefname{assumption}{assumption}{assumptions}
\crefname{corollary}{cororally}{corollaries}
\crefname{property}{property}{properties} 
\title{Learning Survival Models with Right-Censored Reporting Delays} 
\author{%
  Yuta Shikuri \\ 
  The Graduate University for Advanced Studies \\  
  Tokio Marine Holdings, Inc. \\ 
  \texttt{shikuriyuta@gmail.com} \\ 
  \And 
  Hironori Fujisawa \\ 
  The Institute of Statistical Mathematics \\ 
  The Graduate University for Advanced Studies \\ 
  RIKEN \\ 
  \texttt{fujisawa@ism.ac.jp} \\ 
}
\begin{document}
\maketitle

\begin{abstract} 
Survival analysis provides statistical methods to model the time until an event occurs. 
Reporting delays arise when event times are not observed at their occurrence but are only revealed upon reporting. 
This issue is particularly critical for timely risk evaluation when the observation window is short due to administrative censoring. 
In this study, we incorporate right-censored reporting delays by jointly modeling parametric hazards for the event and reporting processes.   
We then construct a consistent estimator for the model parameters and develop a Monte Carlo expectation-maximization algorithm to compute it. 
To address the challenges posed by administrative censoring, we leverage these findings and propose a transfer-learning procedure. 
Experimental results demonstrate that our method improves the accuracy of timely risk evaluation under administrative censoring.  
\end{abstract}

\section{Introduction} 
\label{intro}  
Survival analysis is a statistical framework for modeling and estimating the time until an event occurs \citep{Kalbfleisch}. 
Its applications span a wide range of fields, including health sciences, marketing, reliability engineering, and finance. 
Beyond the statistics community, machine-learning methods have been applied to survival analysis \citep{Ping}. 
The proportional hazards model decomposes the hazard function into a time-dependent baseline hazard and a time-independent component \citep{Cox0, Cox1, Breslow0}. 
Modern approaches in the machine-learning community has extended this model with neural networks to improve predictive performance \citep{Scheel, Zhong, Nagpal}. 
\Cref{Cox} reviews the proportional hazards framework and related developments. 

Censoring is a key concept in survival analysis; for some individuals, the exact event time is unknown and only partial information is available. 
For timely risk evaluation, follow-up is often right-censored at a fixed assessment date, a phenomenon known as administrative censoring. 
When the resulting enrollment-to-evaluation window is short, the identifiability condition required for the asymptotic consistency of the parameter estimator may be violated. 
In such settings, borrowing information from related cohorts can supplement the limited information available in the target domain. 
Motivated by this idea, recent studies have explored transfer learning and domain adaptation for survival models \citep{Chandan, Bellot, Carolin, Ethan, Julie}. 

Reporting delays are also important in survival analysis; a key difference from the standard setting is that the event occurrence time is revealed only after reporting. 
The biostatistical literature emphasizes accounting for these reporting delays, since ignoring them can lead to underestimation of the hazard of event occurrence. 
In typical clinical trials, vital status is often ascertained only at scheduled follow-up visits, resulting in a time lag between death occurrence and its reporting. 
To account for such delays, several studies have proposed estimators as alternatives to the naive Kaplan-Meier estimator \citep{Anastasios, Hubbard}. 
In infectious disease research, individuals may remain unobserved until infection is detected and reported, so the at-risk cohort may not be known in advance. 
A series of studies \citep{Lagakos, Lawless0, Harris, Lawless, Marcello} addresses this issue under right truncation. 
Similar delay structures have also been studied beyond survival analysis; see \Cref{delay}. 

\begin{wrapfigure}{r}{0.5\textwidth}
\vspace{-1em}
\centering
\includegraphics[width=0.47\textwidth]{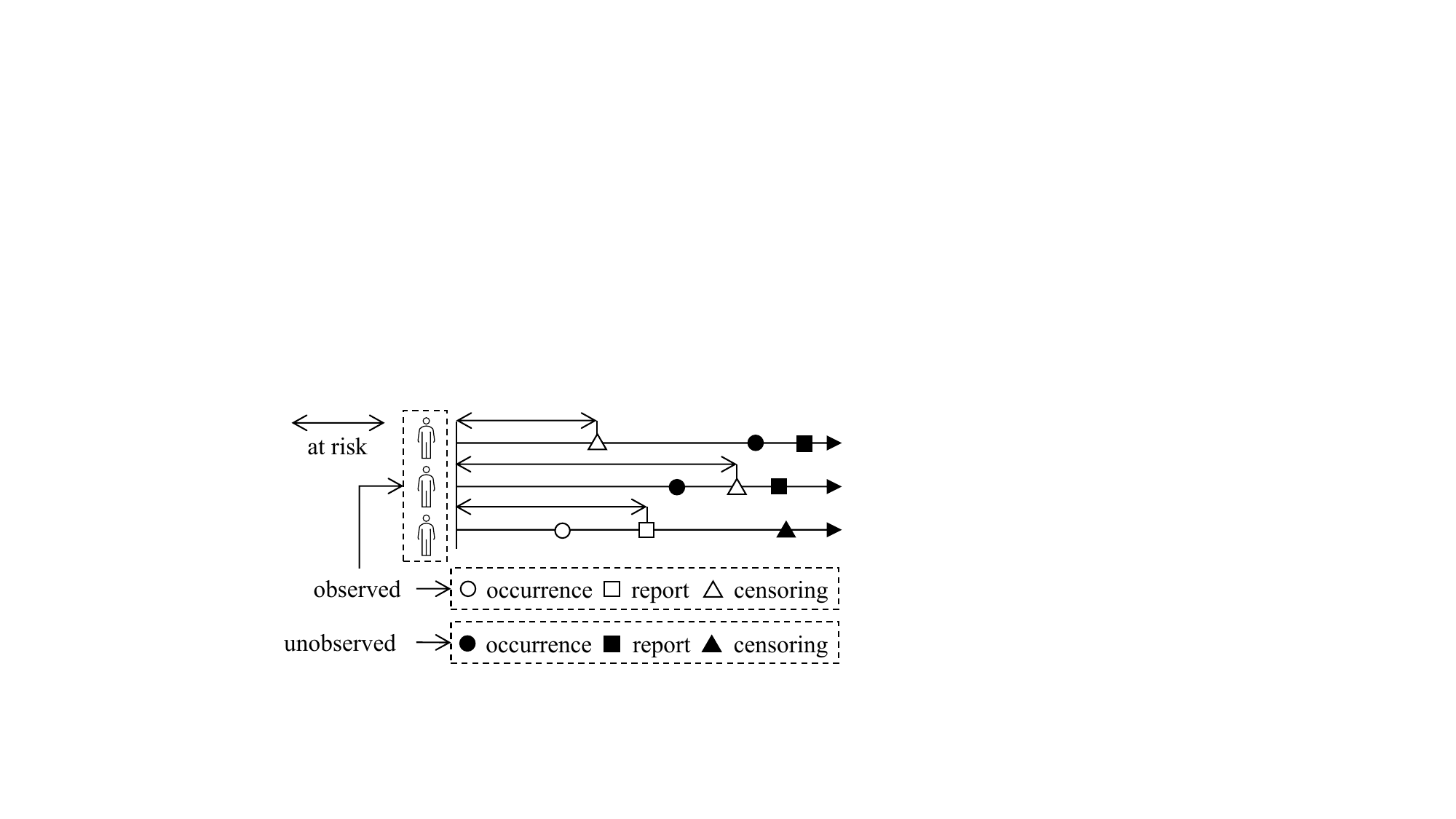}
\caption{
Illustration of right-censored reporting delays. 
The figure shows three possible observation patterns over the study period. 
The detailed setting is provided in \Cref{Setting}. 
}
\label{overview}
\end{wrapfigure}

As illustrated in \Cref{overview}, we consider survival analysis with right-censored reporting delays: individuals are known to be at risk, but the event occurrence time is observed only when it is reported at unscheduled times. 
This observation scheme is plausible in certain real-world applications, where an event of interest triggers an unscheduled report that provides an opportunity to collect detailed information about its occurrence. 
For instance, individuals may be followed from enrollment until death, while disease onset is ascertained only at death based on the recorded cause.  
This example is closely related to survival-sacrifice experiments \citep{Kodell, Bruce, Laan}, with one key distinction: we assume that the exact onset time becomes observable at death. 
As summarized in \Cref{existing_method}, existing methods address situations that partially overlap with our setting. 
However, they do not directly resolve how to estimate the underlying hazard under our observation scheme. 
The difficulty is particularly pronounced under administrative censoring, because reports of events occurring near the fixed assessment date are more likely to be missing from the observed data. 

\begin{wrapfigure}{r}{0.5\textwidth}
\vspace{-1em}
\centering
\includegraphics[width=0.47\textwidth]{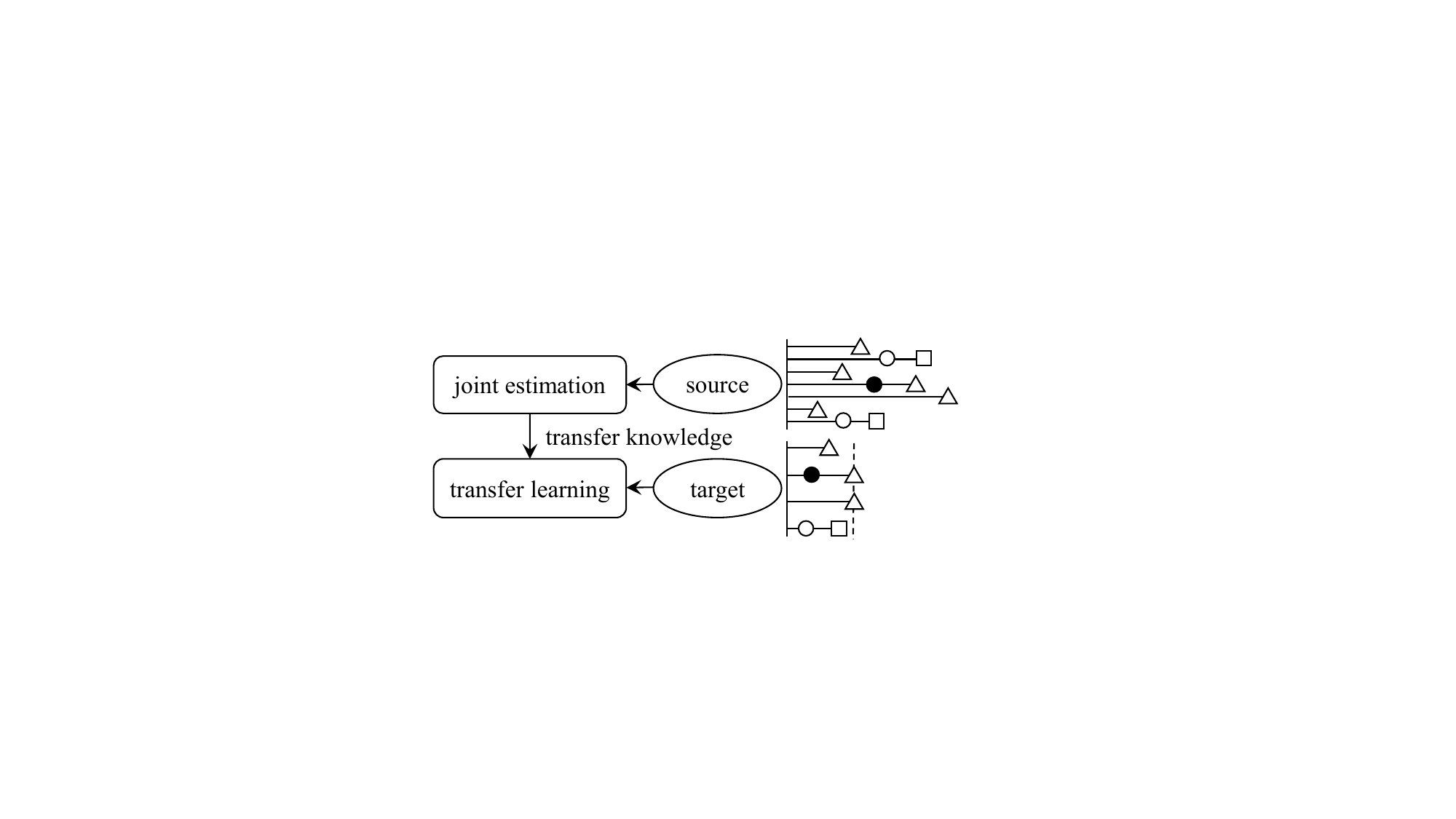}
\caption{ 
Illustration of the transfer-learning procedure. 
The procedure enables consistent estimation of the survival-model parameters in the target domain under administrative censoring. 
The formal setting is given in \Cref{transfer_basic}. 
} 
\label{flow} 
\end{wrapfigure}

In this study, we propose a transfer-learning procedure that allows timely risk evaluation under administratively censored reporting delays. 
We first formulate a survival analysis framework for right-censored reporting delays by jointly modeling the hazards of event occurrence and reporting. 
As a nontrivial theoretical result, we establish the asymptotic consistency of the parameter estimator in the absence of administrative censoring. 
However, even in standard survival analysis, administrative censoring with a short enrollment-to-evaluation window necessitates a restrictive identifiability condition for consistency. 
To address this issue, we adopt the two-stage estimation procedure illustrated in \Cref{flow}. 
Specifically, we consider two domains: a source domain without administrative censoring and a target domain subject to it. 
In the first stage, we estimate all parameters using observations from the source domain. 
In the second stage, we transfer selected components of the source-domain parameter estimates to the target domain and estimate the remaining components using target-domain observations. 
Under appropriate transfer conditions, the resulting estimator is consistent. 
As a specialization of this transfer-learning procedure, we derive an approximate estimating equation for parametric proportional hazards models \citep{Holford, Patrick}. 
Experiments demonstrate that our approach improves the accuracy of timely risk evaluation under administrative censoring. 
\Cref{proofs} provides the proofs of the theoretical results.

\section{Standard Survival Analysis} 
\label{Survival Analysis} 
This section reviews standard survival analysis in the absence of reporting delays. 
The derivations of the estimators and the proofs of their consistency are provided in \Cref{sup_preliminaries}, which serves as background for the proofs of our main results. 

Throughout this paper, random variables are denoted by uppercase letters and their realizations by lowercase letters, with indices and accents omitted when no confusion arises. 
All probability density functions are assumed to be strictly positive, bounded, and continuous in their arguments.  
The true parameter values are assumed to lie in the corresponding parameter spaces. 
We also assume the standard regularity conditions required for the asymptotic arguments below. 
An asterisk superscript denotes a true parameter value, e.g., $\bm{\theta}^*$ for $\bm{\theta}$, whereas a prime superscript denotes another element of the same parameter space, e.g., $\bm{\theta}^\prime$.  
A semicolon explicitly specifies parameters, as in $f(y; \bm{\theta})$, while a vertical bar indicates conditioning on observed covariates, as in $f(y \mid \bm{x})$. 

Survival analysis models the time until an event occurs as a non-negative random variable $T$. 
The hazard function of $T$ at time $t \in [0, \infty)$ is defined as $h(t) \coloneqq f(t) S(t)^{-1}$, 
where $f(t)$ denotes the probability density function of $T$, and the corresponding survival function is given by $S(t) \coloneqq 1 - \int_{0}^t f(s) ds$. 
The censoring time is modeled as a non-negative random variable $C$, assumed to be independent of $T$. 
Define the observation time as $Y \coloneqq \min \{T, C\}$ and the event indicator as $V \coloneqq \mathbb{I}(T \leq C)$, where $\mathbb{I}$ denotes the indicator function. 
For $y \in [0, \infty)$ and $v \in \{0, 1\}$, the joint distribution of $(Y, V)$ is given by 
\begin{align} 
\label{mix_G} 
g(y, v) \coloneqq 
\begin{cases} 
\frac{\partial}{\partial y} \Pr(Y \leq y, V = 0) = S(y) f_c(y) &  (v = 0), \\ 
\frac{\partial}{\partial y} \Pr(Y \leq y, V = 1) = f(y) S_c(y) & (v = 1), 
\end{cases} 
\end{align} 
where $f_c$ and $S_c$ denote the probability density function and the survival probability of $C$, respectively. 
Administrative censoring refers to situations where observations are censored at a predetermined time point $\tau \in [0, \infty)$. 
Let $\bar{Y} \coloneqq \min\{Y, \tau\}$ and $\bar{V} \coloneqq \mathbb{I}(T \leq C, T \leq \tau)$. 
For $y \in [0, \tau]$ and $v \in \{0, 1\}$, the joint distribution of $(\bar{Y}, \bar{V})$ is given by  
\begin{align} 
\label{mix_Gbar} 
\bar{g}(y, v) \coloneqq 
\begin{cases} 
\Pr(\bar{Y} = \tau, \bar{V} = 0) = S(\tau) S_c(\tau) &  (y = \tau, v = 0), \\ 
\frac{\partial}{\partial y} \Pr(\bar{Y} \leq y, \bar{V} = 0) = S(y) f_c(y) &  (y \neq \tau, v = 0), \\ 
\frac{\partial}{\partial y} \Pr(\bar{Y} \leq y, \bar{V} = 1) = f(y) S_c(y) & (v = 1).  
\end{cases} 
\end{align} 

In this study, we parameterize the hazard functions of $T$ and $C$ by unknown real-valued vectors $\bm{\theta}$ and $\bm{\theta}_c$, respectively. 
Parametric proportional hazards models are widely used across many fields. 
They specify the hazard function as $h_b(t; \bm{\alpha}) \phi(\bm{x}; \bm{\beta})$ for $t \in [0, \infty)$ and $\bm{x} \in \mathcal{X}$, 
where the baseline hazard $h_b$ and the time-independent component $\phi$ are strictly positive functions parameterized by unknown real-valued vectors $\bm{\alpha}$ and $\bm{\beta}$, respectively. 

The parameters characterizing the hazard functions of $T$ and $C$ are estimated from independent observations $\mathcal{D} \coloneqq (\bm{x}_i, y_i, v_i)_{i=1}^n$. 
Each observation is characterized by the covariates $\bm{x}_i \in \mathcal{X} \subset \mathbb{R}^d$, the time $y_i \in [0, \infty)$ until the event or censoring, and the event indicator $v_i \in \{0, 1\}$. 
Then the log likelihood in the absence of administrative censoring is given by $\mathcal{L}(\bm{\theta}) + \mathcal{L}_c(\bm{\theta}_c)$, where the two components are defined as 
\begin{align} 
\label{Likelihood} 
\mathcal{L}(\bm{\theta}) \coloneqq \sum_{i = 1}^n \Bigl((1 - v_i) \log S(y_i \mid \bm{x}_i) + v_i \log f(y_i \mid \bm{x}_i)\Bigr), \\ 
\mathcal{L}_c(\bm{\theta}_c) \coloneqq \sum_{i = 1}^n \Bigl((1 - v_i) \log f_c(y_i \mid \bm{x}_i) + v_i \log S_c(y_i \mid \bm{x}_i)\Bigr).  
\end{align} 
Based on the discrepancy between \Cref{mix_G,mix_Gbar}, we derive the corresponding likelihood function under administrative censoring. 
For simplicity, we do not distinguish between realizations of $(Y,V)$ and $(\bar{Y},\bar{V})$, and collectively denote them by $(y_i,v_i)$.  
In the presence of administrative censoring, $y_i$ is understood as the censored observation taking values in $[0, \tau]$. 
As an alternative to $\mathcal{L}_c$, we introduce   
\begin{align} 
\mathcal{L}_c^\dagger(\bm{\theta}_c) \coloneqq \sum_{i = 1}^n \Bigl((1 - v_i^\dagger) \log f_c(y_i \mid \bm{x}_i) + v_i^\dagger \log S_c(y_i \mid \bm{x}_i)\Bigr),  
\end{align} 
where $v_i^\dagger \coloneqq \mathbb{I}(y_i \geq \tau (1 - v_i))$. 
The log likelihood in the presence of administrative censoring is given by $\mathcal{L}(\bm{\theta}) + \mathcal{L}_c^\dagger(\bm{\theta}_c)$. 
The estimator of $\bm{\theta}$ is obtained by maximizing $\mathcal{L}(\bm{\theta})$, regardless of whether administrative censoring is absent or present. 
Provided that the density model $f(t; \bm{\theta})$ is identifiable on $[0,\infty)$ in the absence of administrative censoring, or on $[0,\tau]$ in its presence, this estimator is asymptotically consistent. 
This property is useful when the primary focus is on the hazard of $T$, and that of $C$ is regarded as a nuisance. 
However, when the administrative censoring time $\tau$ is short, the identifiability condition can be restrictive for certain survival models.  
For instance, in piecewise-proportional hazards models \citep{Michael}, the baseline hazard parameters corresponding to intervals beyond $\tau$ are not identifiable.

\section{Survival Analysis under Right-Censored Reporting Delays} 
\label{approach} 
This section presents a general framework for survival analysis under right-censored reporting delays.

\subsection{Problem Setting} 
\label{Setting} 
In this study, we consider an unconventional setting in which two events occur sequentially over time. 
We jointly model the time to the earlier event and the subsequent time to the later event. 
The earlier and later events correspond to the event of interest and its reporting, respectively. 

\begin{wrapfigure}{r}{0.5\textwidth}
\vspace{-1em}
\centering
\includegraphics[width=0.47\textwidth]{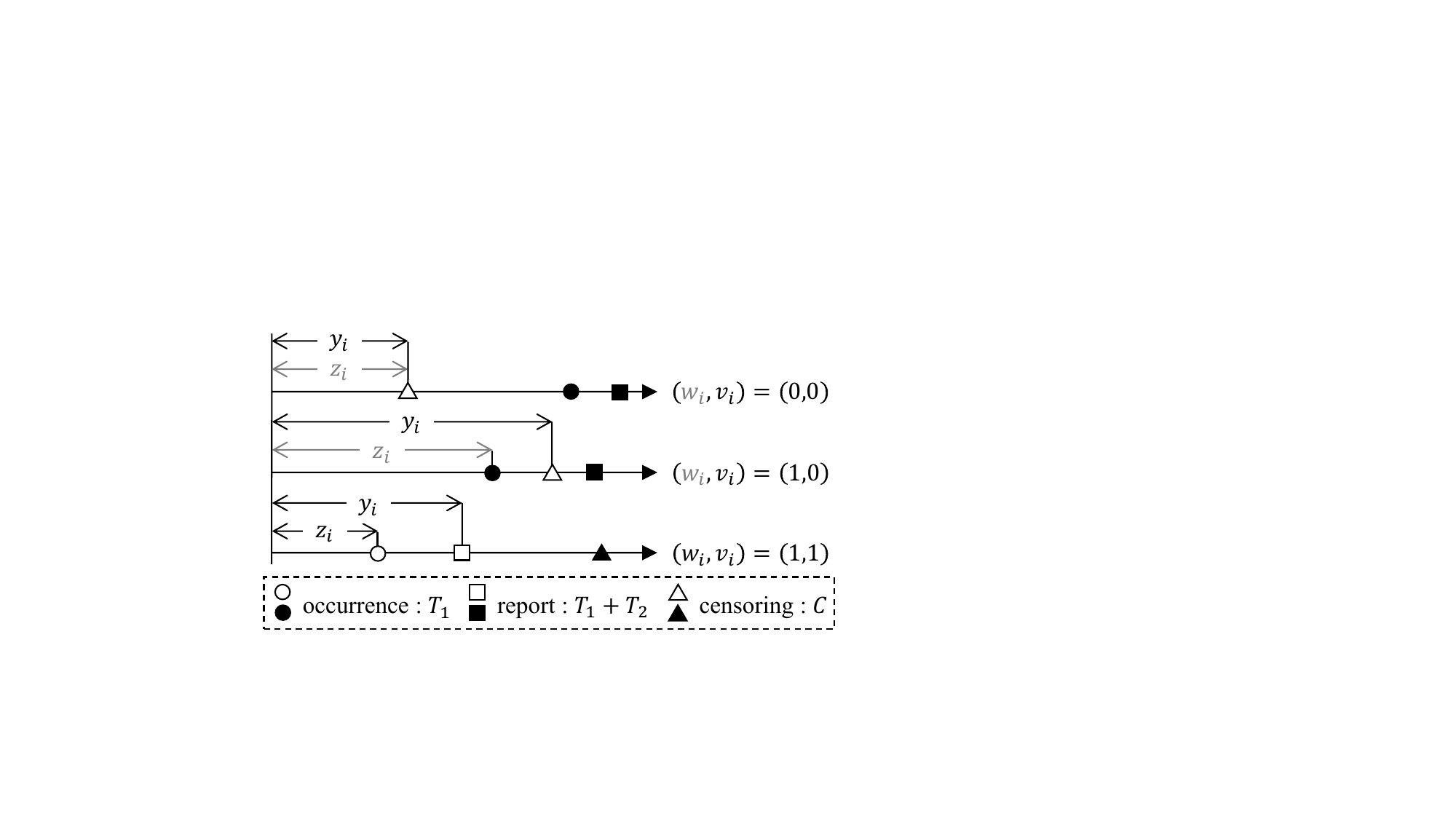}
\caption{
Notation for observations. 
The figure presents the notation for the three observation patterns shown in \Cref{overview}. 
In the top two patterns with $v_i = 0$, the pair $(z_i, w_i)$ remains unobserved. 
}
\label{pattern}
\end{wrapfigure}

We use subscripts $1$ and $2$ for the symbols $(T,h,f,S,\bm{\theta})$ to distinguish the earlier and later events, respectively.  
We assume that $T_1$, $T_2$, and $C$ are mutually independent, and that $\bm{\theta}_1$, $\bm{\theta}_2$, and $\bm{\theta}_c$ are unknown. 
For the earlier event, we let $Z \coloneqq \min \{T_1 , C\}, W \coloneqq \mathbb{I}(T_1 \leq C), \bar{Z} \coloneqq \min\{Z, \tau\}$, and $\bar{W} \coloneqq \mathbb{I}(T_1 \leq C, T_1 \leq \tau)$.  
For the later event, we define $Y, V, \bar{Y}$, and $\bar{V}$ by substituting $T = T_1 + T_2$ into their respective expressions. 
Each observation is characterized by the covariates $\bm{x}_i \in \mathcal{X}$, 
the time $z_i \in [0, y_i]$ until the earlier event or censoring, the event indicator $w_i \in \{0, 1\}$ for the earlier event, 
the time $y_i$ until the later event or censoring, and the event indicator $v_i$ for the later event. 
For simplicity, the same notation $(z_i, y_i, w_i, v_i)$ is used to represent the realizations of both $(Z, Y, W, V)$ and $(\bar{Z}, \bar{Y}, \bar{W}, \bar{V})$. 
A similar setting involving two successive events under right censoring is also discussed by \citet{Visser}. 
We additionally consider the setting in which the earlier event is observed only if the later event occurs, as illustrated in \Cref{overview}.  
The triplet $(\bm{x}_i, y_i, v_i)$ is observed for $1 \leq i \leq n$, while the pair $(z_i, w_i)$ is observed only if $v_i = 1$.  
The probability of observing $(w, v) = (0, 1)$ is zero by the definitions of $(W, V)$ and $(\bar{W}, \bar{V})$. 
\Cref{pattern} illustrates how our notation corresponds to the three possible observation patterns under this extended setting. 
For simplicity, we set $v_i = 0$ for $1 \leq i \leq m$ and $v_i = 1$ for $m + 1 \leq i \leq n$.

\subsection{Estimator Formulation} 
\label{EF} 
We first construct the estimator of $(\bm{\theta}_1, \bm{\theta}_2)$ in the absence of administrative censoring. 
The corresponding density model can be derived through a straightforward calculation. 
\begin{lemma} 
\label{mixture} 
For $0 \leq y$ and $v = 0$, the joint distribution of $(Y, V)$ is given by 
\begin{align} 
\label{v0}  
g_{\scriptscriptstyle\blacksquare}(y) \coloneqq \frac{\partial}{\partial y} \Pr(Y \leq y, V = 0) = S_{\scriptscriptstyle\blacksquare}(y) f_c(y), 
\end{align} 
where $S_{\scriptscriptstyle\blacksquare}(y) \coloneqq S_1(y) + \int_0^y f_1(t) S_2(y - t) dt$. 
For $0 \leq z \leq y$ and $w = v = 1$, the joint distribution of $(Z, Y, W, V)$ is given by 
\begin{align} 
g_{\scriptscriptstyle\square}(z, y) \coloneqq \frac{\partial^2}{\partial y \partial z} \Pr(Z \leq z, Y \leq y, W = 1, V = 1) = f_{\scriptscriptstyle\square}(z, y) S_c(y), 
\end{align} 
where $f_{\scriptscriptstyle\square}(z, y) \coloneqq f_1(z) f_2(y - z)$. 
\end{lemma} 
In \Cref{mixture}, we specify the joint distribution only for the observed variables. 
Accordingly, we can write the likelihood for the observed data as follows. 
\begin{proposition} 
\label{loglikelihood_without}
The log likelihood in the absence of administrative censoring is $\mathcal{E}(\bm{\theta}_1, \bm{\theta}_2) + \mathcal{L}_c(\bm{\theta}_c)$, where $\mathcal{E}$ is defined as   
\begin{align} 
\label{likelihood_E} 
\mathcal{E}(\bm{\theta}_1, \bm{\theta}_2) \coloneqq \sum_{i = 1}^n \Bigl((1 - v_i) \log S_{\scriptscriptstyle\blacksquare}(y_i \mid \bm{x}_i) + v_i \log f_{\scriptscriptstyle\square}(z_i, y_i \mid \bm{x}_i)\Bigr). 
\end{align} 
\end{proposition} 
The parameters $(\bm{\theta}_1, \bm{\theta}_2)$ can be estimated by maximizing $\mathcal{E}(\bm{\theta}_1, \bm{\theta}_2)$, since $\mathcal{L}_c(\bm{\theta}_c)$ does not depend on $(\bm{\theta}_1, \bm{\theta}_2)$. 
Although the structure of the joint distribution resembles that of \Cref{mix_G}, it involves a newly introduced pair of functions, $(S_{\scriptscriptstyle\blacksquare}, f_{\scriptscriptstyle\square})$. 
In this case, the identifiability of $(f_1, f_2)$ with respect to $(\bm{\theta}_1, \bm{\theta}_2)$ is not straightforward, since $(Z, W)$ is observed only when $V = 1$. 
As a result, proving the consistency of the estimator is nontrivial. 
Nevertheless, it can be ensured under the standard identifiability condition. 
\begin{assumption} 
\label{f_identifiability}  
If $f_1(t \mid \bm{x}; \bm{\theta}_1) = f_1(t \mid \bm{x}; \bm{\theta}_1^\prime)$ and $f_2(t \mid \bm{x}; \bm{\theta}_2) = f_2(t \mid \bm{x}; \bm{\theta}_2^\prime)$ for all $t \in [0, \infty)$ and $\bm{x} \in \mathcal{X}$, then $(\bm{\theta}_1, \bm{\theta}_2) = (\bm{\theta}_1^\prime, \bm{\theta}_2^\prime)$. 
\end{assumption} 
\begin{theorem} 
\label{ind2}  
Under \Cref{f_identifiability}, the estimator of $(\bm{\theta}_1, \bm{\theta}_2)$ obtained by maximizing $\mathcal{E}(\bm{\theta}_1, \bm{\theta}_2)$ in the absence of administrative censoring is asymptotically consistent. 
\end{theorem} 

We can consider some extensions. 
As in survival analysis without reporting delays, censoring can be ignored when estimating the event hazards. 
\Cref{APS} shows that this property remains valid even when the censoring times for the earlier and later events differ. 
There is another situation in which the hazard function of $C$ is also of interest, and $(\bm{\theta}_1, \bm{\theta}_2)$ shares some components with $\bm{\theta}_c$. 
For instance, the hazards of occurrence, reporting, and censoring may all be associated with a common factor such as income. 
\Cref{JEC} introduces assumptions under which we can jointly estimate $(\bm{\theta}_1, \bm{\theta}_2, \bm{\theta}_c)$. 

Next, we consider the case with administrative censoring. 
The structure of the probability model differs slightly from that described in \Cref{mixture}. 
\begin{lemma}
\label{mix20} 
For $0 \leq y \leq \tau$ and $v = 0$, the joint distribution of $(\bar{Y}, \bar{V})$ is given by   
\begin{align}
\bar{g}_{\scriptscriptstyle\blacksquare}(y) \coloneqq 
\begin{cases} 
\Pr(\bar{Y} = \tau, \bar{V} = 0) = S_{\scriptscriptstyle\blacksquare}(\tau) S_c(\tau) & (y = \tau), \\ 
\frac{\partial}{\partial y} \Pr(\bar{Y} \leq y, \bar{V} = 0) = S_{\scriptscriptstyle\blacksquare}(y) f_c(y) & (y \neq \tau). 
\end{cases} 
\end{align} 
For $0 \leq z \leq y \leq \tau$ and $w = v = 1$, the joint distribution of $(\bar{Z}, \bar{Y}, \bar{W}, \bar{V})$ is given by 
\begin{align} 
\bar{g}_{\scriptscriptstyle\square}(z, y) \coloneqq \frac{\partial^2}{\partial y \partial z} \Pr(\bar{Z} \leq z, \bar{Y} \leq y, \bar{W} = 1, \bar{V} = 1) = f_{\scriptscriptstyle\square}(z, y) S_c(y).   
\end{align} 
\end{lemma} 
\begin{proposition} 
\label{loglikelihood_with} 
The log likelihood in the presence of administrative censoring is $\mathcal{E}(\bm{\theta}_1, \bm{\theta}_2) + \mathcal{L}_c^\dagger(\bm{\theta}_c)$. 
\end{proposition} 
The presence of administrative censoring results in the replacement of the density $f_c$ in \Cref{v0} at $y = \tau$ by the survival function $S_c$. 
Then the log likelihood appears to allow for the estimation of the hazard functions of $T_1$ and $T_2$ without requiring the censoring time distribution. 
The following theorem shows that the consistency requires a more restrictive condition than \Cref{f_identifiability}. 
\begin{assumption} 
\label{known0} 
If $f_1(t_1 \mid \bm{x}; \bm{\theta}_1) f_2(t_2 \mid \bm{x}; \bm{\theta}_2) = f_1(t_1 \mid \bm{x}; \bm{\theta}_1^\prime) f_2(t_2 \mid \bm{x}; \bm{\theta}_2^\prime)$ for all $t_1, t_2 \in [0, \tau]$ satisfying $t_1 + t_2 \leq \tau$ and all $\bm{x} \in \mathcal{X}$, then $(\bm{\theta}_1, \bm{\theta}_2) = (\bm{\theta}_1^\prime, \bm{\theta}_2^\prime)$. 
\end{assumption} 
\begin{theorem} 
\label{ind20}  
Under \Cref{known0}, the estimator of $(\bm{\theta}_1, \bm{\theta}_2)$ obtained by maximizing $\mathcal{E}(\bm{\theta}_1, \bm{\theta}_2)$ in the presence of administrative censoring is asymptotically consistent. 
\end{theorem} 

Because observations are restricted to the finite time interval $[0,\tau]$, we can no longer exploit the normalization property that a density integrates to one over its entire support. 
Consequently, \Cref{known0} cannot be decomposed into separate identifiability assumptions for $f_1$ and $f_2$, as in \Cref{f_identifiability}; instead, it is stated in terms of the identifiability of their product. 
This product-form requirement does not substantially change the nature of the identifiability difficulty: it mainly excludes special classes of hazard functions, such as the example in \Cref{example_hazard}. 
Rather, similarly to the setting without reporting delays, the essential difficulty of \Cref{known0} lies in establishing identifiability when the administrative censoring time $\tau$ is short. 
The transfer-learning approach proposed in \Cref{approach_transfer} is designed to address this difficulty induced by administrative censoring.

\subsection{Optimization} 
\label{JPE} 
One approach to jointly estimating the earlier and later hazard functions is to directly maximize $\mathcal{E}(\bm{\theta}_1, \bm{\theta}_2)$. 
However, computing its gradient with respect to the parameters is often challenging, as $\log S_{\scriptscriptstyle\blacksquare}(y)$ does not admit a closed-form expression. 
To overcome this problem, we maximize $\mathcal{E}(\bm{\theta}_1,\bm{\theta}_2)$ using a Monte Carlo expectation-maximization (EM) algorithm \citep{Tanner}. 
Specifically, we employ a lower bound on $\mathcal{E}(\bm{\theta}_1,\bm{\theta}_2)$ derived from Jensen's inequality \citep{Billingsley} and approximate the resulting expectations via Monte Carlo integration. 
This yields closed-form updates. 
We begin by establishing the following theorem, which forms the basis of an EM algorithm \citep{Dempster}. 
\begin{theorem} 
\label{EMbelow} 
Let $\mathcal{L}_1(\bm{\theta}_1 \mid \mathcal{D}_1)$ denote the log-likelihood obtained from \Cref{Likelihood} by setting $f = f_1$ and $\mathcal{D}=\mathcal{D}_1 \coloneqq (\bm{x}_i, z_i, w_i)_{i=1}^n$. 
Similarly, let $\mathcal{L}_2(\bm{\theta}_2 \mid \mathcal{D}_2)$ denote the corresponding log-likelihood obtained by setting $f = f_2$ and $\mathcal{D}=\mathcal{D}_2 \coloneqq (\bm{x}_i, y_i-z_i, v_i)_{i=1}^n$. 
Then $\mathcal{E}(\bm{\theta}_1, \bm{\theta}_2)$ is bounded below by  
\begin{align} 
\label{LB}
\mathcal{LB}(q, \bm{\theta}_1, \bm{\theta}_2) \coloneqq \mathcal{I} \bigl[\bigl(\mathcal{L}_1(\bm{\theta}_1 \mid \mathcal{D}_1) + \mathcal{L}_2(\bm{\theta}_2 \mid \mathcal{D}_2) - \log q\bigr) q\bigr], 
\end{align} 
where $\mathcal{I} \coloneqq \mathcal{I}_1 \circ \cdots \circ \mathcal{I}_m$, $\mathcal{I}_i [\,\cdot\,] \coloneqq [\,\cdot\,] \big|_{z_i = y_i, w_i = 0} + \int_{0}^{y_i} [\,\cdot\,] \big|_{w_i = 1} dz_i$, 
and $q \coloneqq \prod_{i=1}^m q_i(z_i, w_i)$, $q_i$ is a strictly positive function such that $\mathcal{I}_i[q_i(z_i, w_i)] = 1$.  
Given fixed parameters $(\bm{\theta}_1, \bm{\theta}_2)$, this lower bound on $\mathcal{E}(\bm{\theta}_1, \bm{\theta}_2)$ attains its maximum when, for each $1 \leq i \leq m$, the following holds:   
\begin{align} 
\label{posterior} 
q_i(z_i, w_i) = 
\begin{cases} 
S_{\scriptscriptstyle\blacksquare}(y_i \mid \bm{x}_i; \bm{\theta}_1, \bm{\theta}_2)^{-1} S_1(z_i \mid \bm{x}_i; \bm{\theta}_1) & (w_i = 0), \\
S_{\scriptscriptstyle\blacksquare}(y_i \mid \bm{x}_i; \bm{\theta}_1, \bm{\theta}_2)^{-1} f_1(z_i \mid \bm{x}_i; \bm{\theta}_1) S_2(y_i - z_i \mid \bm{x}_i; \bm{\theta}_2) & (w_i = 1). 
\end{cases}  
\end{align}  
\end{theorem} 

We iteratively update the density $q$ and optimize the parameters $(\bm{\theta}_1, \bm{\theta}_2)$. 
Let $(q^{(\kappa)}, \bm{\theta}_1^{(\kappa)}, \bm{\theta}_2^{(\kappa)})$ denote the iterate after the $\kappa$-th iteration, starting from the initial values at $\kappa = 0$. 
In the $\kappa$-th iteration, we first set the density $q = q^{(\kappa)}$ as in \Cref{posterior}, which maximizes the lower bound $\mathcal{LB}(q, \bm{\theta}_1^{(\kappa - 1)}, \bm{\theta}_2^{(\kappa - 1)})$. 
We then obtain the updated parameters $(\bm{\theta}_1, \bm{\theta}_2) = (\bm{\theta}_1^{(\kappa)}, \bm{\theta}_2^{(\kappa)})$ by maximizing the lower bound $\mathcal{LB}(q^{(\kappa)}, \bm{\theta}_1, \bm{\theta}_2)$. 
In this iterative scheme, $\mathcal{E}(\bm{\theta}_1^{(\kappa)}, \bm{\theta}_2^{(\kappa)})$ is monotonically nondecreasing in $\kappa$. 

The integral in \Cref{LB} cannot be evaluated in closed form. 
To resolve this issue, we adopt a Monte Carlo approximation. 
The missing values in $\mathcal{D}_1$ and $\mathcal{D}_2$ are imputed using samples generated from $q$. 
This procedure is repeated $s$ times, and the resulting pseudo-complete datasets are then combined. 
Let $\mathcal{P}_1$ and $\mathcal{P}_2$ denote the combined pseudo-complete datasets corresponding to $\mathcal{D}_1$ and $\mathcal{D}_2$, respectively. 
\Cref{LB} is approximated by 
\begin{align} 
\label{original}
s^{-1} \mathcal{L}_1(\bm{\theta}_1 \mid \mathcal{P}_1) + s^{-1} \mathcal{L}_2(\bm{\theta}_2 \mid \mathcal{P}_2) - \mathcal{I} [q \log q]. 
\end{align} 
The formulation in \Cref{original} enables the independent optimization of $\bm{\theta}_1$ and $\bm{\theta}_2$ using standard full-likelihood methods, thereby facilitating the reuse of existing resources. 

The remaining problem is how to efficiently generate samples from $q$ for use in Monte Carlo integration. 
As stated at the end of \Cref{EMbelow}, the normalization constant $S_{\scriptscriptstyle\blacksquare}(y_i \mid \bm{x}_i; \bm{\theta}_1, \bm{\theta}_2)$ appears in the sampling distribution $q_i(z_i, w_i)$. 
To avoid computing this constant explicitly, we adopt the rejection sampling scheme:  
For each $1 \leq i \leq m$, repeatedly draw $t \sim f_1(t \mid \bm{x}_i)$ until one of the following conditions is met. 
If the sampled earlier-event time $t$ exceeds the censoring time $y_i$, we set $(z_i, w_i) = (y_i, 0)$. 
Otherwise, we generate an independent $U \sim \mathrm{Uniform}(0,1)$. 
If $U \leq S_2(y_i - t \mid \bm{x}_i)$, we accept the sample and set $(z_i, w_i) = (t, 1)$. 
If $U > S_2(y_i - t \mid \bm{x}_i)$, we reject the sampled $t$ and redraw.

\section{Transfer Learning for Administratively Censored Reporting Delays} 
\label{approach_transfer} 
As discussed in \Cref{EF}, asymptotic consistency of the parameter estimator can be established under a moderate assumption in the absence of administrative censoring. 
In contrast, under administrative censoring, consistency requires a restrictive identifiability condition. 
This section presents a transfer-learning procedure designed to address this issue.

\subsection{Transfer Setting} 
\label{transfer_basic} 
We consider a setting in which the target domain is subject to administrative censoring, and supplementary observations unaffected by administrative censoring are available from the source domain. 
We introduce a two-stage estimation procedure by decomposing the parameter vectors in the earlier and later hazards as $\bm{\theta}_1 = (\bm{\theta}_{1,1}, \bm{\theta}_{1,2})$ and $\bm{\theta}_2 = (\bm{\theta}_{2,1}, \bm{\theta}_{2,2})$, respectively.  
Let $\mathcal{E}_s$ and $\mathcal{E}_t$ be defined as $\mathcal{E}$ evaluated on the observations from the source and target domains, respectively.  
In the first stage, we jointly estimate $(\bm{\theta}_1, \bm{\theta}_2)$ by maximizing $\mathcal{E}_s(\bm{\theta}_1, \bm{\theta}_2)$, and denote the corresponding first-stage estimator of $(\bm{\theta}_{1,1}, \bm{\theta}_{2,1})$ by $(\hat{\bm{\theta}}_{1,1}, \hat{\bm{\theta}}_{2,1})$.   
In the second stage, we estimate the remaining components $(\bm{\theta}_{1,2}, \bm{\theta}_{2,2})$ by maximizing $\mathcal{E}_t((\hat{\bm{\theta}}_{1,1}, \bm{\theta}_{1,2}), (\hat{\bm{\theta}}_{2,1}, \bm{\theta}_{2,2}))$, and denote the resulting estimator by $(\hat{\bm{\theta}}_{1,2}, \hat{\bm{\theta}}_{2,2})$. 
Both stages can be carried out by applying the procedure described in \Cref{JPE}. 

We next establish the asymptotic consistency of the estimator obtained by the above two-stage procedure.  
This consistency relies on the following transfer condition. 
\begin{assumption} 
\label{transfer_condition}  
The true values of the transferable components $(\bm{\theta}_{1,1}, \bm{\theta}_{2,1})$, denoted by $(\bm{\theta}_{1,1}^*, \bm{\theta}_{2,1}^*)$, are shared by the source and target domains.  
Moreover, the remaining components $(\bm{\theta}_{1,2}, \bm{\theta}_{2,2})$ are identifiable in the following sense: 
if $f_1(t_1 \mid \bm{x}; (\bm{\theta}_{1,1}^*, \bm{\theta}_{1,2})) f_2(t_2 \mid \bm{x}; (\bm{\theta}_{2,1}^*, \bm{\theta}_{2,2})) = f_1(t_1 \mid \bm{x}; (\bm{\theta}_{1,1}^*, \bm{\theta}_{1,2}^\prime)) f_2(t_2 \mid \bm{x}; (\bm{\theta}_{2,1}^*, \bm{\theta}_{2,2}^\prime))$ for all $t_1, t_2 \in [0, \tau]$ satisfying $t_1 + t_2 \leq \tau$ and all $\bm{x} \in \mathcal{X}$, then $(\bm{\theta}_{1,2}, \bm{\theta}_{2,2}) = (\bm{\theta}_{1,2}^\prime, \bm{\theta}_{2,2}^\prime)$. 
\end{assumption} 
Intuitively, \Cref{transfer_condition} means that \Cref{known0} holds for the remaining components $(\bm{\theta}_{1,2}, \bm{\theta}_{2,2})$ once the transferable components $(\bm{\theta}_{1,1}, \bm{\theta}_{2,1})$ are known.  
In the two-stage procedure, these true values are replaced by their first-stage estimates.  
The following result formalizes this intuition. 
\begin{theorem} 
\label{two_stage_consistency} 
Let $n_s$ and $n_t$ denote the sample sizes in the source and target domains, respectively. 
Then, under \Cref{f_identifiability,transfer_condition}, the two-stage estimator $((\hat{\bm{\theta}}_{1,1}, \hat{\bm{\theta}}_{1,2}), (\hat{\bm{\theta}}_{2,1}, \hat{\bm{\theta}}_{2,2}))$ is asymptotically consistent as $n_s, n_t \to \infty$. 
\end{theorem} 
While the two-stage procedure restores consistency under administrative censoring, the first-stage estimation error propagates into the second stage and affects the asymptotic distribution of the resulting estimator. 
\Cref{variance_appendix} illustrates the behavior of this error propagation. 

We illustrate the above transfer-learning procedure using a parametric proportional hazards model for the earlier event time $T_1$. 
To allow a more general specification, we write the earlier hazard as $h_1(t \mid \bm{x}; \bm{\theta}_1) = h_b(t \mid \bm{x}; \bm{\alpha}) \phi(\bm{x}; \bm{\beta})$, where the baseline component $h_b$ is allowed to depend on the covariates. 
As a simple setting in which \Cref{transfer_condition} can be verified, we assume that the baseline component $h_b$ and the later hazard $h_2$ are shared by the source and target domains, whereas the time-independent component $\phi$ in the earlier hazard $h_1$ is domain-specific. 
Accordingly, we set $\bm{\theta}_{1,1} = \bm{\alpha}$, $\bm{\theta}_{1,2} = \bm{\beta}$, $\bm{\theta}_{2,1} = \bm{\theta}_2$, and $\bm{\theta}_{2,2} = \emptyset$.  
Under this assumption, \Cref{transfer_condition} is satisfied provided that the domain-specific parameter $\bm{\beta}$ is identifiable. 
This setting is suitable when domains differ in disease occurrence levels through $\phi$ but share the onset-time pattern $h_b$ and the post-onset reporting process $h_2$.

\subsection{Optimization for Parametric Proportional Hazards Models} 
\label{transfer_PPH} 
The procedure described in \Cref{JPE} can be used for parameter estimation in the second stage.  
However, the repeated Monte Carlo approximation required by this procedure may induce substantial fluctuations in the second-stage estimates at each update as the administrative censoring time $\tau$ increases and new target-domain data accumulate. 
Such instability is undesirable for operational decision-making, where practitioners require stable estimates to monitor risk and adapt their actions over time. 
We next show that, in the transfer-learning setting described at the end of \Cref{transfer_basic}, this issue can be alleviated by constructing a more tractable estimating equation. 
For clarity, we denote the target-domain coefficient by $\bm{\gamma}$ instead of $\bm{\beta}$, while writing the transferred source-domain estimates as $\hat{\bm{\alpha}}$ and $\hat{\bm{\theta}}_2$.  
In the following, $(\bm{x}_i, z_i, y_i, w_i, v_i)_{i=1}^n$ denotes the target-domain observations. 

We aim to construct a more tractable approximation to the original estimating equation $\bm{\zeta}_0(\bm{\gamma}) = \bm{0}$, 
where $\bm{\zeta}_0(\bm{\gamma}) \coloneqq \frac{\partial}{\partial \bm{\gamma}} \mathcal{E}_t((\hat{\bm{\alpha}}, \bm{\gamma}), \hat{\bm{\theta}}_2)$. 
The evaluation of $\bm{\zeta}_0(\bm{\gamma})$ involves $S_{\scriptscriptstyle\blacksquare}(y)$, whose second term does not have a closed form. 
As a simple approximation, we ignore this second term. 
Then the equation $\bm{\zeta}_0(\bm{\gamma}) = \bm{0}$ becomes 
\begin{align} 
\label{check0}
\bm{\zeta}(\bm{\gamma}) \coloneqq \sum_{i = 1}^n \Bigl(v_i \frac{\partial \log \phi(\bm{x}_i; \bm{\gamma})}{\partial \bm{\gamma}} - r_i \frac{\partial \phi(\bm{x}_i; \bm{\gamma})}{\partial \bm{\gamma}}\Bigr) = \bm{0}, 
\end{align} 
where $r_i \coloneqq \int_0^{\tilde{y}_i} h_b(t \mid \bm{x}_i; \hat{\bm{\alpha}}) dt$, with $\tilde{y}_i \coloneqq y_i$ for $1 \leq i \leq m$ and $\tilde{y}_i \coloneqq z_i$ for $m < i \leq n$. 
Although solving \Cref{check0} often yields stable estimates, it can introduce severe estimation bias due to the ignored term; see \Cref{bias}. 
To mitigate this bias, we focus on the regime in which only a small proportion of individuals experience the event over $[0,\tau]$. 
This regime is plausible when the early event hazard is low or the administrative censoring time $\tau$ is short. 
In this regime, we derive a more accurate approximation to the original estimating equation $\bm{\zeta}_0(\bm{\gamma}) = \bm{0}$ by applying a Taylor expansion to the difference $\bm{\zeta}_0(\hat{\bm{\gamma}}) - \bm{\zeta}(\hat{\bm{\gamma}})$. 
\begin{assumption} 
\label{ynu}  
For each $1 \leq i \leq m$, $\nu_i y_i$ is sufficiently small, where $\nu_i$ is the maximum of the earlier hazard on the time interval $[0, y_i]$, more precisely, 
$\nu_i \coloneqq \phi(\bm{x}_i; \hat{\bm{\gamma}}) \max_{t \in [0, y_i]} h_b(t \mid \bm{x}_i; \hat{\bm{\alpha}})$.  
\end{assumption} 
\begin{theorem} 
\label{MLE} 
Let $\tilde{\bm{\zeta}}$ be defined analogously to $\bm{\zeta}$ by replacing $r_i$ with 
\begin{align} 
\tilde{r}_i \coloneqq \int_0^{\tilde{y}_i} h_b(t \mid \bm{x}_i; \hat{\bm{\alpha}}) \bigl(1 - S_2(y_i - t \mid \bm{x}_i; \hat{\bm{\theta}}_2)\bigr)^{1 - v_i} dt. 
\end{align} 
Let $\delta_i$ denote the fixed point satisfying  
\begin{align} 
\label{equation_delta} 
\delta_i = \int_{\delta_i}^{y_i} \exp\Bigl(\int_0^t \bigl(\nu_i - h_2(u \mid \bm{x}_i; \hat{\bm{\theta}}_2)\bigr) du\Bigr) dt. 
\end{align} 
Let $\bm{\pi}_i$ denote the vector of componentwise absolute values of $\frac{\partial}{\partial \bm{\gamma}} \log \phi(\bm{x}_i; \bm{\gamma}) |_{\bm{\gamma} = \hat{\bm{\gamma}}}$.  
Then, under \Cref{ynu}, we have $\tilde{\bm{\zeta}}(\hat{\bm{\gamma}}) = \mathcal{O}(\sum_{i = 1}^m \nu_i^2 \delta_i y_i \bm{\pi}_i)$. 
\end{theorem} 

\Cref{MLE} shows that $\tilde{\bm{\zeta}}(\bm{\gamma}) = \bm{0}$ can be a good approximation of the original estimating equation $\bm{\zeta}_0(\bm{\gamma}) = \bm{0}$. 
Note that the proof provides the multiplicative constant in the Landau notation. 
The coefficient $\tilde{r}_i$ can be viewed as a weighted version of $r_i$, where the weight represents the probability that an event occurring at time $t$ is reported within the remaining observation period. 
This weight arises from the approximation of $\bm{\zeta}_0(\hat{\bm{\gamma}}) - \bm{\zeta}(\hat{\bm{\gamma}})$. 
Aside from a one-time precomputation of $\tilde{r}_i$, solving the approximate equation avoids repeated numerical integration and can therefore improve numerical stability compared with the procedure described in \Cref{JPE}. 

We can see that $\tilde{\bm{\zeta}}(\hat{\bm{\gamma}})$ has a smaller error when the later hazard $h_2$ is substantially larger than the earlier hazard $h_1$. 
This regime can arise, for example, when disease onset is rare but subsequent reporting, such as death after onset, occurs relatively frequently. 
In this regime, the right-hand side of \Cref{equation_delta} is strictly decreasing and rapidly decays to zero as $\delta_i$ increases. 
Therefore, the fixed-point solution of \Cref{equation_delta} is close to zero. 
Consequently, the equation $\tilde{\bm{\zeta}}(\bm{\gamma}) = \bm{0}$ provides a reasonable approximation to the original estimating equation $\bm{\zeta}_0(\bm{\gamma}) = \bm{0}$.

\section{Numerical Study} 
\label{Experiment} 
In this section, we present experimental evidence supporting the validity of our approach. 
The experiment is motivated by timely risk evaluation for individuals newly enrolled in an at-risk monitoring program, where disease-onset times are retrospectively observed only at death. 

\textbf{Survival Model.}  
The earlier hazard is modeled by a piecewise-proportional hazards model. 
Specifically, the baseline hazard is defined as $h_b(t; \bm{\alpha}) = \alpha_1 \mathbb{I}(t \in [0, 0.5]) + \alpha_2 \mathbb{I}(t \in (0.5, 1]) + \alpha_3 \mathbb{I}(t \in (1, \infty))$, where $\bm{\alpha} = (\alpha_i)_{i=1}^3 \in (0, \infty)^3$. 
The earlier hazards in the source and target domains are modeled as $h_b(t; \bm{\alpha}) \exp(\bm{\beta}^\top \bm{x})$ and $h_b(t; \bm{\alpha}) \exp(\bm{\gamma}^\top \bm{x})$, respectively, 
where $\bm{\beta} = (\beta_i)_{i=1}^d \in \mathbb{R}^d$ and $\bm{\gamma} = (\gamma_i)_{i=1}^d \in \mathbb{R}^d$. 
The later hazard is assumed to be constant over time, with $h_2(t) = \lambda \in (0,\infty)$. 

\textbf{Estimation.} 
The parameters for the source domain were estimated using the procedure described in \Cref{JPE}, with $100$ iterations and $s = 10$. 
Increasing these values had little effect on the resulting estimates. 
The maximization of $\mathcal{L}_1(\bm{\theta}_1 \mid \mathcal{P}_1)$ was performed using the lifelines library, version 0.27.8, with the penalizer parameter fixed at $10^{-4}$ for numerical stability.  
All other hyperparameters in lifelines were set to their default values. 
The parameters were initialized using estimates obtained by substituting the missing values $(z_i, w_i)_{i=1}^m$ with $(y_i, v_i)_{i=1}^m$. 
For the target domain, we computed $\bm{\gamma}$ by solving the approximate estimating equation $\tilde{\bm{\zeta}}(\bm{\gamma})=\bm{0}$ via an L-BFGS-based minimization procedure, initialized at $\bm{\gamma}=\bm{0}$. 
The L-BFGS optimization was implemented using SciPy, version 1.2.1, with default solver options. 
The computational environment is provided in \Cref{CE}.  
We refer to this transfer-based estimation procedure as \textit{Ours}. 

\textbf{Baselines.} 
Because our study addresses a novel setting with reporting delays, no existing method is directly applicable. 
We therefore compare our approach with \textit{Standard}, a survival model that does not explicitly model reporting delays and is characterized only by the event hazard $h_1$.  
To ensure identifiability of $\bm{\theta}_1$, this baseline uses the same transfer procedure as \textit{Ours}, except that the missing values $(z_i, w_i)_{i=1}^m$ are replaced with $(y_i, v_i)_{i=1}^m$ in the relevant domains. 
We also include an idealized no-delay benchmark, referred to as \textit{Oracle}, which follows the same transfer procedure as \textit{Ours} but assumes that the missing values are observed in both the source and target domains. 

\textbf{Dataset.} 
We adopted two benchmark datasets, \textit{Colon} and \textit{Rotterdam}, from the Rdatasets repository, version 1.0.0. 
These datasets contain $929$ and $2982$ individuals, respectively, and provide observable time-to-event outcomes for disease onset and death. 
To impose right-censored reporting delays, we treat death occurring after disease onset as the reporting event. 
We rescale $z_i$ and $y_i$ by the median observation time within each dataset.  
For the \textit{Colon} dataset, we define the target domain using the \texttt{surg} variable, which indicates the interval from surgery to registration. 
Patients with a long surgery-to-registration interval are treated as the registration-delayed target cohort, whereas patients with a short interval are used as the source domain. 
For the \textit{Rotterdam} dataset, we define the target domain using the \texttt{year} variable, which indicates the surgery year.  
After ordering patients by surgery year, the latest $25\%$ are treated as the temporally later target cohort, whereas the remaining $75\%$ are used as the source domain. 
Once the split-defining variables are excluded, the \textit{Colon} and \textit{Rotterdam} datasets contain $10$ and $9$ covariates, respectively. 
These covariates were used in the survival models, and the preprocessing steps are described in \Cref{DP}. 

\textbf{Evaluation.} 
We evaluate the estimation performance under varying administrative censoring times $\tau \in \{0.25, 0.5, 0.75, 1\}$. 
Because $\tau \leq 1$ in all cases, identifiability of the survival model cannot be ensured from the target-domain observations alone, which motivates the transfer-learning procedure. 
We conducted $100$ trials. 
In each trial, a randomly selected half of the source-domain observations was used to estimate the parameters in the first stage of the two-stage procedure. 
The target-domain observations were randomly shuffled and evenly split into two subsets. 
One subset was administratively censored at each $\tau$ and used for parameter estimation in the second stage. 
The other subset served as a hold-out set to evaluate the estimated disease-onset hazard using the integrated Brier score \citep{Erika}. 
The evaluation horizon was set to the larger of the maximum onset time and the maximum censoring time in the hold-out set. 
For inverse probability of censoring weighting, we estimated the censoring survival function using the Kaplan--Meier estimator fitted on the hold-out set. 

\textbf{Result.} 
\Cref{result_real} shows that \textit{Ours} consistently outperforms \textit{Standard} across both datasets and all administrative censoring times. 
The improvement is largest for small $\tau$, where events have less time to be reported before the assessment date. 
This supports the importance of explicitly modeling reporting delays under short administrative censoring. 
The remaining gap between \textit{Ours} and \textit{Oracle} reflects the intrinsic difficulty of recovering information that is unobserved due to reporting delays. 
To further clarify the impact of reporting delays, we provide a supplementary sensitivity analysis in \Cref{AR}. 

\begin{table*}[t]
\caption{
Performance of risk evaluation on real data. 
The mean and standard deviation of the integrated Brier score over $100$ trials are reported; lower values indicate better performance. 
} 
\label{result_real}
\begin{center}
\begin{tabular}{cccccc} 
\toprule
Dataset & Method & $\tau = 0.25$ & $\tau = 0.5$ & $\tau = 0.75$ & $\tau = 1$ \\ 
\midrule 
Colon & Standard & $0.319\ (0.032)$ & $0.268\ (0.022)$ & $0.246\ (0.020)$ & $0.237\ (0.017)$ \\ 
Colon & Ours & $\mathbf{0.272\ (0.028)}$ & $\mathbf{0.241\ (0.017)}$ & $\mathbf{0.238\ (0.017)}$ & $\mathbf{0.233\ (0.015)}$ \\ 
Colon & Oracle & $0.232\ (0.016)$ & $0.230\ (0.014)$ & $0.226\ (0.012)$ & $0.226\ (0.013)$ \\ 
\midrule 
Rotterdam & Standard & $0.247\ (0.018)$ & $0.212\ (0.015)$ & $0.194\ (0.013)$ & $0.190\ (0.012)$  \\ 
Rotterdam & Ours & $\mathbf{0.203\ (0.017)}$ & $\mathbf{0.181\ (0.011)}$ & $\mathbf{0.175\ (0.010)}$ & $\mathbf{0.173\ (0.009)}$  \\ 
Rotterdam & Oracle & $0.179\ (0.010)$ & $0.170\ (0.008)$ & $0.168\ (0.007)$ & $0.168\ (0.007)$  \\ 
\bottomrule 
\end{tabular}
\end{center}
\end{table*}

\section{Conclusion}
\label{Conclusion} 
This study addressed survival modeling with right-censored reporting delays and proposed a transfer-learning procedure to mitigate the identifiability challenges posed by administrative censoring. 
The setting considered in this study is not limited to biostatistical applications. 
As described in \Cref{scenarios}, similar delayed-observation mechanisms may also arise in insurance risk evaluation. 
A limitation is that policyholder-level insurance data are typically confidential and rarely available as public benchmark datasets, making fully reproducible empirical validation in real insurance settings difficult.  
Nevertheless, the proposed framework may provide a useful methodological basis for practical risk evaluation when event occurrence is only observed after a reporting delay. 
Future work should further examine the validity of the transfer assumptions in application-specific settings, since misspecified shared components may lead to biased risk estimates and inappropriate decision-making.

\medskip
\small


\bibliographystyle{unsrtnat}
\bibliography{neurips2026_LSM}

\newpage  
\appendix

\section{Notations}
\label{Description}
\begin{table}[h]
  \centering
  \begin{center}
  \begin{tabular}{c|l} 
    \toprule
    Notation  & \multicolumn{1}{c}{Description}  \\ 	
    \midrule 
    $\mathcal{D}$ & $\mathcal{D} \equiv (\bm{x}_i, y_i, v_i)_{i=1}^n$  \\ 
    $\mathcal{D}_1$ & $\mathcal{D}_1 \equiv (\bm{x}_i, z_i, w_i)_{i=1}^n$  \\  
    $\mathcal{D}_2$ & $\mathcal{D}_2 \equiv (\bm{x}_i, y_i - z_i, v_i)_{i=1}^n$  \\ 
    $\mathcal{P}_1$ & the combined pseudo-complete dataset corresponding for the earlier process \\ 
    $\mathcal{P}_2$ & the combined pseudo-complete dataset corresponding for the later process \\ 
    $n$ & the number of observations \\ 
    $m$ & the number of observations with $v_i = 0$ \\ 
    $s$ & the number of pseudo-complete datasets  \\ 
    $Y$ & $\min \{T, C\}$ \\ 
    $\bar{Y}$ & $\min\{Y, \tau\}$, where $\tau$ denotes the administrative censoring time \\  
    $V$ & $\mathbb{I}(T \leq C)$ \\  
    $\bar{V}$ & $\mathbb{I}(T \leq C, T \leq \tau)$ \\  
    $Z$ & $\min \{T_1 , C\}$ \\ 
    $\bar{Z}$ & $\min\{Z, \tau\}$ \\ 
    $W$ & $\mathbb{I}(T_1 \leq C)$ \\  
    $\bar{W}$ & $\mathbb{I}(T_1 \leq C, T_1 \leq \tau)$ \\   
    $\bm{x}_i$ & $\bm{x}_i \in \mathcal{X} \subset \mathbb{R}^d$ represents the individual covariates \\ 
    $y_i$ & the time from the start of the earlier process to the later event or censoring \\ 
    $v_i$ & the later event indicator \\ 
    $z_i$ & the time until the earlier event or censoring \\ 
    $w_i$ & the earlier event indicator \\ 
    $\mathcal{L}(\bm{\theta})$ & $\log \prod_{i = 1}^n f(y_i \mid \bm{x}_i)^{v_i} S(y_i \mid \bm{x}_i)^{1 - v_i}$ \\ 
    $\mathcal{L}_c(\bm{\theta}_c)$ & $\log \prod_{i = 1}^n S_c(y_i \mid \bm{x}_i)^{v_i} f_c(y_i \mid \bm{x}_i)^{1 - v_i}$ \\   
    $\mathcal{L}_c^\dagger(\bm{\theta}_c)$ & $\log \prod_{i = 1}^n S_c(y_i \mid \bm{x}_i)^{v_i^\dagger} f_c(y_i \mid \bm{x}_i)^{1 - v_i^\dagger}$ with $v_i^\dagger \equiv \mathbb{I}(y_i \geq \tau (1 - v_i))$ \\   
    $\mathcal{E}(\bm{\theta}_1, \bm{\theta}_2)$ & $\log \prod_{i = 1}^n S_{\scriptscriptstyle\blacksquare}(y_i \mid \bm{x}_i)^{1 - v_i} f_{\scriptscriptstyle\square}(z_i, y_i \mid \bm{x}_i)^{v_i}$ \\ 
    $f(t)$ & the probability density function of $T$ at time $t$ \\  
    $f_c(t)$ & the probability density function of $C$ at time $t$ \\  
    $f_1(t)$ & the probability density function of $T_1$ at time $t$ \\  
    $f_2(t)$ & the probability density function of $T_2$ at time $t$ \\  
    $f_{\scriptscriptstyle\square}(z, y)$ & $f_1(z) f_2(y - z)$ \\ 
    $S(t)$ & $\int_t^{\infty} f(s) ds$ \\  
    $S_c(t)$ & $\int_t^{\infty} f_c(s) ds$ \\  
    $S_1(t)$ & $\int_t^{\infty} f_1(s) ds$ \\  
    $S_2(t)$ & $\int_t^{\infty} f_2(s) ds$ \\ 
    $S_{\scriptscriptstyle\blacksquare}(y)$ & $S_1(y) + \int_0^y f_1(t) S_2(y - t) dt$ \\ 
    $\bm{\theta}$ & the real-valued vector that parameterize $f$ \\ 
    $\bm{\theta}_c$ & the real-valued vector that parameterize $f_c$ \\ 
    $\bm{\theta}_1$ & the real-valued vector that parameterize $f_1$ \\ 
    $\bm{\theta}_2$ & the real-valued vector that parameterize $f_2$ \\ 
    \bottomrule 
  \end{tabular}
  \end{center}
\end{table}

\section{Related Work} 
\subsection{Proportional Hazards Model} 
\label{Cox} 
The Cox proportional hazards model \citep{Cox0} is widely used to represent time dependence and covariate effect, assuming a constant hazard ratio over time. 
The regression coefficients capturing covariate effect are estimated via partial likelihood \citep{Cox1}, after which the baseline hazard characterizing time dependence is estimated nonparametrically \citep{Breslow0}. 
In later developments, the baseline hazard has been parametrically modeled using piecewise constant functions \citep{Holford} or B-splines \citep{Patrick}. 
A large body of work has focused on enhancing the flexibility of covariate effect, 
replacing the linear predictor with nonlinear structures such as additive splines \citep{Tibshirani}, decision trees \citep{LeBlanc}, and neural networks \citep{Faraggi,Jared}. 
More recently, extensions beyond the proportional hazards assumption have been studied in the machine learning community \citep{Scheel,Zhong,Nagpal}. 
Additionally, efficient and stable optimization for Cox regression has become increasingly important in large-scale and high-dimensional settings \citep{Jiachang}.

\subsection{Delay Structures in Related Fields} 
\label{delay} 
Delay structures arising from operational constraints have been studied in fields such as nowcasting \citep{Matthias, Stoner1, Wood, Stoner2}, outbreak detection \citep{Farrington, Angela}, and online learning \citep{Csaba, Kent, Mann, Ilai, Brueckne, Seldin}.  
Nowcasting and outbreak detection typically correct reporting lags in aggregated case counts for real-time estimation or surveillance.  
In online learning, delayed feedback similarly complicates sequential decision-making, since the learner must update its model or policy before outcomes from earlier actions are revealed.

\subsection{Comparison with Existing Methods for Survival Analysis with Reporting Delays} 
\label{existing_method}  

\begin{table*}[h]
\centering
\caption{
Comparison of observation schemes involving reporting delays in survival analysis.  
A check mark indicates that the corresponding feature is explicitly incorporated. 
}
\begin{tabular}{cccc}
\toprule 
Setting & \makecell{At-risk cohort} & \makecell{Unscheduled report} & \makecell{Exact event time \\ ascertained at report} \\ 
\midrule
Clinical trials & \checkmark &  & \checkmark \\ 
Infectious disease research &  & \checkmark & \checkmark \\ 
Survival-sacrifice experiments & \checkmark & \checkmark &  \\
Our setting & \checkmark & \checkmark & \checkmark \\ 
\bottomrule
\end{tabular}
\end{table*}

\section{Preliminaries} 
\label{sup_preliminaries}

\subsection{Standard Survival Model without Administrative Censoring} 
\label{mix} 
For $y \in [0, \infty)$ and $v = 0$, the joint distribution of $(Y, V)$ is   
\begin{align} 
  \frac{\partial}{\partial y} \Pr(Y \leq y, V = 0)  
  &= \frac{\partial}{\partial y} \Pr(C \leq y, C < T)& \nonumber  \\ 
  &= \frac{\partial}{\partial y} \int_0^y dc \int_c^{\infty} dt f(t) f_c(c)& \nonumber  \\  
  &= \int_y^{\infty} f(t) f_c(y) dt& \nonumber  \\  
  &= S(y) f_c(y).&   
\end{align} 
For $y \in [0, \infty)$ and $v = 1$, the joint distribution of $(Y, V)$ is    
\begin{align} 
  \frac{\partial}{\partial y} \Pr(Y \leq y, V = 1)   
  &= \frac{\partial}{\partial y} \Pr(T \leq y, T \leq C)& \nonumber  \\ 
  &= \frac{\partial}{\partial y} \int_0^y dt \int_t^{\infty} dc f(t) f_c(c)& \nonumber  \\  
  &= \int_y^{\infty} f(y) f_c(c) dc& \nonumber  \\  
  &= f(y) S_c(y).&   
\end{align} 
Consequently, the joint distribution of $(Y, V)$ is given by $g(y, v)$.  
Then the log likelihood without administrative censoring is given by 
\begin{align} 
  \log \prod_{i = 1}^n g(y_i, v_i \mid \bm{x}_i) 
  &= \log \prod_{i = 1}^n \bigl(S(y_i \mid \bm{x}_i) f_c(y_i \mid \bm{x}_i)\bigr)^{1 - v_i} \bigl(f(y_i \mid \bm{x}_i) S_c(y_i \mid \bm{x}_i)\bigr)^{v_i}&  \nonumber \\      
  &= \log \prod_{i = 1}^n S(y_i \mid \bm{x}_i)^{1 - v_i} f(y_i \mid \bm{x}_i)^{v_i} f_c(y_i \mid \bm{x}_i)^{1 - v_i} S_c(y_i \mid \bm{x}_i)^{v_i}&  \nonumber \\         
  &= \mathcal{L}(\bm{\theta}) + \mathcal{L}_c(\bm{\theta}_c).& 
\end{align} 
From the definition of $g$, we have 
\begin{align} 
\frac{\partial S_c(y)}{\partial y} = - \frac{g(y, 0)}{S(y)},~~~\frac{\partial S(y)}{\partial y} = - \frac{g(y, 1)}{S_c(y)}. 
\end{align} 
Using the Cauchy-Lipschitz uniqueness theorem, this equation with $S(0) = S_c(0) = 1$ has a unique solution. 
Therefore, if $(f, f_c)$ is identifiable with respect to $(\bm{\theta}, \bm{\theta}_c)$, then $g$ is also identifiable with respect to $(\bm{\theta}, \bm{\theta}_c)$. 
According to the weak law of large numbers and Jensen's inequality, the following holds:  
\begin{align} 
  \frac{1}{n} \bigl(\mathcal{L}(\bm{\theta}) + \mathcal{L}_c(\bm{\theta}_c)\bigr)   
  &= \frac{1}{n} \sum_{i = 1}^n \log g(y_i, v_i \mid \bm{x}_i; \bm{\theta}, \bm{\theta}_c)&  \nonumber \\ 
  &\xrightarrow{p} \int_{\mathcal{X}} d\bm{x} \int_0^{\infty} dy \sum_{v \in \{0, 1\}} \rho(\bm{x}) g(y, v \mid \bm{x}; \bm{\theta}^*, \bm{\theta}_c^*) \log g(y, v \mid \bm{x}; \bm{\theta}, \bm{\theta}_c)&  \nonumber \\  
  &= \int_{\mathcal{X}} d\bm{x} \int_0^{\infty} dy \sum_{v \in \{0, 1\}} \rho(\bm{x}) g(y, v \mid \bm{x}; \bm{\theta}^*, \bm{\theta}_c^*) \log \frac{g(y, v \mid \bm{x}; \bm{\theta}, \bm{\theta}_c)}{g(y, v \mid \bm{x}; \bm{\theta}^*, \bm{\theta}_c^*)}&  \nonumber \\  
  &~~~~+ \int_{\mathcal{X}} d\bm{x} \int_0^{\infty} dy \sum_{v \in \{0, 1\}} \rho(\bm{x}) g(y, v \mid \bm{x}; \bm{\theta}^*, \bm{\theta}_c^*) \log g(y, v \mid \bm{x}; \bm{\theta}^*, \bm{\theta}_c^*)&  \nonumber \\ 
  &\leq \int_{\mathcal{X}} d\bm{x} \int_0^{\infty} dy \sum_{v \in \{0, 1\}} \rho(\bm{x}) g(y, v \mid \bm{x}; \bm{\theta}^*, \bm{\theta}_c^*) \log g(y, v \mid \bm{x}; \bm{\theta}^*, \bm{\theta}_c^*),&  
\end{align} 
where $\bm{\theta}^*$ and $\bm{\theta}_c^*$ denote the true values of $\bm{\theta}$ and $\bm{\theta}_c$, respectively, and $\rho(\bm{x})$ denotes the true probability distribution of $X$. 
Similarly, the following holds:  
\begin{align} 
  \frac{1}{n} \mathcal{L}(\bm{\theta})  
  &= \frac{1}{n} \sum_{i = 1}^n \log f(y_i \mid \bm{x}_i; \bm{\theta})^{v_i} S(y_i \mid \bm{x}_i; \bm{\theta})^{1 - v_i}&  \nonumber \\ 
  &\xrightarrow{p} \int_{\mathcal{X}} d\bm{x} \int_0^{\infty} dy \sum_{v \in \{0, 1\}} \rho(\bm{x}) g(y, v \mid \bm{x}; \bm{\theta}^*, \bm{\theta}_c^*) \log f(y \mid \bm{x}; \bm{\theta})^v S(y \mid \bm{x}; \bm{\theta})^{1 - v}&  \nonumber \\  
  &= \int_{\mathcal{X}} d\bm{x} \int_0^{\infty} dy \sum_{v \in \{0, 1\}} \rho(\bm{x}) g(y, v \mid \bm{x}; \bm{\theta}^*, \bm{\theta}_c^*) \log \frac{f(y \mid \bm{x}; \bm{\theta})^v S(y \mid \bm{x}; \bm{\theta})^{1 - v}}{f(y \mid \bm{x}; \bm{\theta}^*)^v S(y \mid \bm{x}; \bm{\theta}^*)^{1 - v}}&  \nonumber \\  
  &~~~~+ \int_{\mathcal{X}} d\bm{x} \int_0^{\infty} dy \sum_{v \in \{0, 1\}} \rho(\bm{x}) g(y, v \mid \bm{x}; \bm{\theta}^*, \bm{\theta}_c^*) \log f(y \mid \bm{x}; \bm{\theta}^*)^v S(y \mid \bm{x}; \bm{\theta}^*)^{1 - v}&  \nonumber \\  
  &= \int_{\mathcal{X}} d\bm{x} \int_0^{\infty} dy \sum_{v \in \{0, 1\}} \rho(\bm{x}) g(y, v \mid \bm{x}; \bm{\theta}^*, \bm{\theta}_c^*) \log \frac{g(y, v \mid \bm{x}; \bm{\theta}, \bm{\theta}_c^*)}{g(y, v \mid \bm{x}; \bm{\theta}^*, \bm{\theta}_c^*)}&  \nonumber \\  
  &~~~~+ \int_{\mathcal{X}} d\bm{x} \int_0^{\infty} dy \sum_{v \in \{0, 1\}} \rho(\bm{x}) g(y, v \mid \bm{x}; \bm{\theta}^*, \bm{\theta}_c^*) \log f(y \mid \bm{x}; \bm{\theta}^*)^v S(y \mid \bm{x}; \bm{\theta}^*)^{1 - v}&  \nonumber \\  
  &\leq \int_{\mathcal{X}} d\bm{x} \int_0^{\infty} dy \sum_{v \in \{0, 1\}} \rho(\bm{x}) g(y, v \mid \bm{x}; \bm{\theta}^*, \bm{\theta}_c^*) \log f(y \mid \bm{x}; \bm{\theta}^*)^v S(y \mid \bm{x}; \bm{\theta}^*)^{1 - v}.&  
\end{align} 
Consider the equality condition in Jensen's inequality. 
If $(f, f_c)$ is identifiable with respect to $(\bm{\theta}, \bm{\theta}_c)$, then the estimator of $(\bm{\theta}, \bm{\theta}_c)$ obtained by maximizing $\mathcal{L}(\bm{\theta}) + \mathcal{L}_c(\bm{\theta}_c)$ is asymptotically consistent. 
If $f$ is identifiable with respect to $\bm{\theta}$, then the estimator of $\bm{\theta}$ obtained by maximizing $\mathcal{L}(\bm{\theta})$ is asymptotically consistent.

\subsection{Standard Survival Model with Administrative Censoring} 
\label{mix2} 
For $y = \tau$ and $v = 0$, the joint distribution of $(\bar{Y}, \bar{V})$ is   
\begin{align} 
  \Pr(\bar{Y} = \tau, \bar{V} = 0)   
  &= \Pr(\tau \leq C, \tau < T)& \nonumber  \\ 
  &= \int_{\tau}^{\infty} dc \int_{\tau}^{\infty} dt f(t) f_c(c)& \nonumber  \\  
  &= S(\tau) S_c(\tau).&   
\end{align} 
For $y \in [0, \tau)$ and $v = 0$, the joint distribution of $(\bar{Y}, \bar{V})$ is   
\begin{align} 
  \frac{\partial}{\partial y} \Pr(\bar{Y} \leq y, \bar{V} = 0) = \frac{\partial}{\partial y} \Pr(Y \leq y, V = v) = S(y) f_c(y).   
\end{align} 
For $y \in [0, \tau)$ and $v = 1$, the joint distribution of $(\bar{Y}, \bar{V})$ is   
\begin{align} 
  \frac{\partial}{\partial y} \Pr(\bar{Y} \leq y, \bar{V} = 1) = \frac{\partial}{\partial y} \Pr(Y \leq y, V = v) = f(y) S_c(y). 
\end{align} 
Consequently, the joint distribution of $(\bar{Y}, \bar{V})$ is given by $\bar{g}(y, v)$.  
Then the log likelihood with administrative censoring is given by 
\begin{align} 
  &\log \prod_{i = 1}^n \bar{g}(y_i, v_i \mid \bm{x}_i)&  \nonumber \\   
  &= \log \prod_{i = 1}^n \bigl(S(y_i \mid \bm{x}_i) S_c(y_i \mid \bm{x}_i)\bigr)^{(1 - v_i) \mathbb{I}(y_i = \tau)} \bigl(S(y_i \mid \bm{x}_i) f_c(y_i \mid \bm{x}_i)\bigr)^{(1 - v_i) \mathbb{I}(y_i < \tau)} \bigl(f(y_i \mid \bm{x}_i) S_c(y_i \mid \bm{x}_i)\bigr)^{v_i}&  \nonumber \\   
  &= \log \prod_{i = 1}^n S(y_i \mid \bm{x}_i)^{1 - v_i} f(y_i \mid \bm{x}_i)^{v_i} f_c(y_i \mid \bm{x}_i)^{1 - v_i^\dagger} S_c(y_i \mid \bm{x}_i)^{v_i^\dagger}&  \nonumber \\         
  &= \mathcal{L}(\bm{\theta}) + \mathcal{L}_c^\dagger(\bm{\theta}_c).& 
\end{align} 
Consider the case where the time interval $[0, \infty)$ and the density $g$ in the previous subsection are replaced by the time interval $[0, \tau]$ and the density $\bar{g}$, respectively.  
If $(f, f_c)$ over the time interval $[0, \tau]$ is identifiable with respect to $(\bm{\theta}, \bm{\theta}_c)$, then the estimator of $(\bm{\theta}, \bm{\theta}_c)$ obtained by maximizing $\mathcal{L}(\bm{\theta}) + \mathcal{L}_c^\dagger(\bm{\theta}_c)$ is asymptotically consistent. 
If $f$ over the time interval $[0, \tau]$ is identifiable with respect to $\bm{\theta}$, the estimator of $\bm{\theta}$ obtained by maximizing $\mathcal{L}(\bm{\theta})$ is asymptotically consistent.

\section{Proofs} 
\label{proofs} 

\subsection{Proof of \Cref{mixture}} 
For $0 \leq y$ and $v = 0$, the joint distribution of $(Y, V)$ is  
\begin{align} 
  \frac{\partial}{\partial y} \Pr(Y \leq y, V = 0)  
  &= \frac{\partial}{\partial y} \Pr(Y \leq y, W = 0, V = 0) + \frac{\partial}{\partial y} \Pr(Y \leq y, W = 1, V = 0)& \nonumber  \\ 
  &= \frac{\partial}{\partial y} \Pr(C \leq y, C < T_1) + \frac{\partial}{\partial y} \Pr(C \leq y, T_1 \leq C, C < T_1 + T_2)& \nonumber  \\ 
  &= \frac{\partial}{\partial y} \int_0^y dc \int_c^{\infty} dt_1 f_1(t_1) f_c(c) + \frac{\partial}{\partial y} \int_0^y dc \int_0^c dt_1 \int_{c - t_1}^{\infty} dt_2 f_1(t_1) f_2(t_2) f_c(c)& \nonumber  \\  
  &= \int_y^{\infty} dt_1 f_1(t_1) f_c(y) + \int_0^y dt_1 \int_{y - t_1}^{\infty} dt_2 f_1(t_1) f_2(t_2) f_c(y)& \nonumber  \\  
  &= S_1(y) f_c(y) + \int_0^y f_1(t_1) S_2(y - t_1) f_c(y) dt_1& \nonumber  \\   
  &= S_{\scriptscriptstyle\blacksquare}(y) f_c(y).&   
\end{align} 
For $0 \leq z \leq y$ and $w = v = 1$, the joint distribution of $(Z, Y, W, V)$ is  
\begin{align} 
  \frac{\partial^2}{\partial y \partial z} \Pr(Z \leq z, Y \leq y, W = 1, V = 1)  
  &= \frac{\partial^2}{\partial y \partial z} \Pr(T_1 \leq z, T_1 + T_2 \leq y, T_1 \leq C, T_1 + T_2 \leq C)& \nonumber  \\ 
  &= \frac{\partial^2}{\partial y \partial z} \int_0^z dt_1 \int_0^{y - t_1} dt_2 \int_{t_1 + t_2}^{\infty} dc f_1(t_1) f_2(t_2) f_c(c)& \nonumber  \\  
  &= \frac{\partial^2}{\partial y \partial z} \int_0^z dt_1 \int_0^{y - t_1} dt_2 f_1(t_1) f_2(t_2) S_c(t_1 + t_2)& \nonumber  \\  
  &= \frac{\partial}{\partial y} \int_0^{y - z} f_1(z) f_2(t_2) S_c(z + t_2) dt_2& \nonumber  \\  
  &= f_1(z) f_2(y - z) S_c(y)& \nonumber  \\  
  &= f_{\scriptscriptstyle\square}(z, y) S_c(y).& 
\end{align}

\subsection{Proof of \Cref{loglikelihood_without}} 
Applying \Cref{mixture}, the log likelihood without administrative censoring is given by 
\begin{align} 
  &\log \prod_{i = 1}^n g_{\scriptscriptstyle\blacksquare}(y_i \mid \bm{x}_i)^{1 - v_i} g_{\scriptscriptstyle\square}(z_i, y_i \mid \bm{x}_i)^{v_i}& \nonumber  \\  
  &= \log \prod_{i = 1}^n \bigl(S_{\scriptscriptstyle\blacksquare}(y_i \mid \bm{x}_i) f_c(y_i \mid \bm{x}_i)\bigr)^{1 - v_i} \bigl(f_{\scriptscriptstyle\square}(z_i, y_i \mid \bm{x}_i) S_c(y_i \mid \bm{x}_i)\bigr)^{v_i}&  \nonumber \\      
  &= \log \prod_{i = 1}^n S_{\scriptscriptstyle\blacksquare}(y_i \mid \bm{x}_i)^{1 - v_i} f_{\scriptscriptstyle\square}(z_i, y_i \mid \bm{x}_i)^{v_i} f_c(y_i \mid \bm{x}_i)^{1 - v_i} S_c(y_i \mid \bm{x}_i)^{v_i}&  \nonumber \\         
  &= \mathcal{E}(\bm{\theta}_1, \bm{\theta}_2) + \mathcal{L}_c(\bm{\theta}_c).& 
\end{align}

\subsection{Proof of \Cref{ind2}} 
\label{ind2_proof} 
For any $t \in [0, \infty)$, the following holds:  
\begin{align}
f_1(t) = \frac{f_{\scriptscriptstyle\square}(t, t)}{f_2(0)},~~~f_2(t) = \frac{f_{\scriptscriptstyle\square}(0, t)}{f_1(0)}. 
\end{align} 
Since both $f_1$ and $f_2$ are the probability density functions, we have 
\begin{align}
f_1(t) = \frac{f_{\scriptscriptstyle\square}(t, t)}{\int_0^\infty f_{\scriptscriptstyle\square}(s, s) ds},~~~f_2(t) = \frac{f_{\scriptscriptstyle\square}(0, t)}{\int_0^\infty f_{\scriptscriptstyle\square}(0, s) ds}. 
\end{align} 
Therefore, using \Cref{f_identifiability}, if $f_{\scriptscriptstyle\square}(z, y \mid \bm{x}; \bm{\theta}_1, \bm{\theta}_2) = f_{\scriptscriptstyle\square}(z, y \mid \bm{x}; \bm{\theta}_1^\prime, \bm{\theta}_2^\prime)$ for all $0 \leq z \leq y$ and $\bm{x} \in \mathcal{X}$, 
then $(\bm{\theta}_1, \bm{\theta}_2) = (\bm{\theta}_1^\prime, \bm{\theta}_2^\prime)$. 
From \Cref{mixture}, the weak law of large numbers, and Jensen's inequality, we have   
\begin{align} 
  \frac{1}{n} \mathcal{E}(\bm{\theta}_1, \bm{\theta}_2)   
  &= \frac{1}{n} \sum_{i = 1}^n \bigl((1 - v_i) \log S_{\scriptscriptstyle\blacksquare}(y_i \mid \bm{x}_i; \bm{\theta}_1, \bm{\theta}_2) + v_i \log f_{\scriptscriptstyle\square}(z_i, y_i \mid \bm{x}_i; \bm{\theta}_1, \bm{\theta}_2)\bigr)&  \nonumber \\ 
  &\xrightarrow{p} \int_{\mathcal{X}} d\bm{x} \int_0^{\infty} dy \rho(\bm{x}) g_{\scriptscriptstyle\blacksquare}(y \mid \bm{x}; \bm{\theta}_1^*, \bm{\theta}_2^*, \bm{\theta}_c^*) \log S_{\scriptscriptstyle\blacksquare}(y \mid \bm{x}; \bm{\theta}_1, \bm{\theta}_2)&  \nonumber \\  
  &~~~~+ \int_{\mathcal{X}} d\bm{x} \int_0^{\infty} dy \int_0^y dz \rho(\bm{x}) g_{\scriptscriptstyle\square}(z, y \mid \bm{x}; \bm{\theta}_1^*, \bm{\theta}_2^*, \bm{\theta}_c^*) \log f_{\scriptscriptstyle\square}(z, y \mid \bm{x}; \bm{\theta}_1, \bm{\theta}_2)&  \nonumber \\  
  &= \int_{\mathcal{X}} d\bm{x} \int_0^{\infty} dy \rho(\bm{x}) g_{\scriptscriptstyle\blacksquare}(y \mid \bm{x}; \bm{\theta}_1^*, \bm{\theta}_2^*, \bm{\theta}_c^*) \log \frac{S_{\scriptscriptstyle\blacksquare}(y \mid \bm{x}; \bm{\theta}_1, \bm{\theta}_2)}{S_{\scriptscriptstyle\blacksquare}(y \mid \bm{x}; \bm{\theta}_1^*, \bm{\theta}_2^*)}&  \nonumber \\  
  &~~~~+ \int_{\mathcal{X}} d\bm{x} \int_0^{\infty} dy \int_0^y dz \rho(\bm{x}) g_{\scriptscriptstyle\square}(z, y \mid \bm{x}; \bm{\theta}_1^*, \bm{\theta}_2^*, \bm{\theta}_c^*) \log \frac{f_{\scriptscriptstyle\square}(z, y \mid \bm{x}; \bm{\theta}_1, \bm{\theta}_2)}{f_{\scriptscriptstyle\square}(z, y \mid \bm{x}; \bm{\theta}_1^*, \bm{\theta}_2^*)}&  \nonumber \\  
  &~~~~+ \int_{\mathcal{X}} d\bm{x} \int_0^{\infty} dy \rho(\bm{x}) g_{\scriptscriptstyle\blacksquare}(y \mid \bm{x}; \bm{\theta}_1^*, \bm{\theta}_2^*, \bm{\theta}_c^*) \log S_{\scriptscriptstyle\blacksquare}(y \mid \bm{x}; \bm{\theta}_1^*, \bm{\theta}_2^*)&  \nonumber \\  
  &~~~~+ \int_{\mathcal{X}} d\bm{x} \int_0^{\infty} dy \int_0^y dz \rho(\bm{x}) g_{\scriptscriptstyle\square}(z, y \mid \bm{x}; \bm{\theta}_1^*, \bm{\theta}_2^*, \bm{\theta}_c^*) \log f_{\scriptscriptstyle\square}(z, y \mid \bm{x}; \bm{\theta}_1^*, \bm{\theta}_2^*)&  \nonumber \\ 
  &= \int_{\mathcal{X}} d\bm{x} \int_0^{\infty} dy \rho(\bm{x}) g_{\scriptscriptstyle\blacksquare}(y \mid \bm{x}; \bm{\theta}_1^*, \bm{\theta}_2^*, \bm{\theta}_c^*) \log \frac{g_{\scriptscriptstyle\blacksquare}(y \mid \bm{x}; \bm{\theta}_1, \bm{\theta}_2, \bm{\theta}_c^*)}{g_{\scriptscriptstyle\blacksquare}(y \mid \bm{x}; \bm{\theta}_1^*, \bm{\theta}_2^*, \bm{\theta}_c^*)}&  \nonumber \\  
  &~~~~+ \int_{\mathcal{X}} d\bm{x} \int_0^{\infty} dy \int_0^y dz \rho(\bm{x}) g_{\scriptscriptstyle\square}(z, y \mid \bm{x}; \bm{\theta}_1^*, \bm{\theta}_2^*, \bm{\theta}_c^*) \log \frac{g_{\scriptscriptstyle\square}(z, y \mid \bm{x}; \bm{\theta}_1, \bm{\theta}_2, \bm{\theta}_c^*)}{g_{\scriptscriptstyle\square}(z, y \mid \bm{x}; \bm{\theta}_1^*, \bm{\theta}_2^*, \bm{\theta}_c^*)}&  \nonumber \\  
  &~~~~+ \int_{\mathcal{X}} d\bm{x} \int_0^{\infty} dy \rho(\bm{x}) g_{\scriptscriptstyle\blacksquare}(y \mid \bm{x}; \bm{\theta}_1^*, \bm{\theta}_2^*, \bm{\theta}_c^*) \log S_{\scriptscriptstyle\blacksquare}(y \mid \bm{x}; \bm{\theta}_1^*, \bm{\theta}_2^*)&  \nonumber \\  
  &~~~~+ \int_{\mathcal{X}} d\bm{x} \int_0^{\infty} dy \int_0^y dz \rho(\bm{x}) g_{\scriptscriptstyle\square}(z, y \mid \bm{x}; \bm{\theta}_1^*, \bm{\theta}_2^*, \bm{\theta}_c^*) \log f_{\scriptscriptstyle\square}(z, y \mid \bm{x}; \bm{\theta}_1^*, \bm{\theta}_2^*)&  \nonumber \\ 
  &\leq \int_{\mathcal{X}} d\bm{x} \int_0^{\infty} dy \rho(\bm{x}) g_{\scriptscriptstyle\blacksquare}(y \mid \bm{x}; \bm{\theta}_1^*, \bm{\theta}_2^*, \bm{\theta}_c^*) \log S_{\scriptscriptstyle\blacksquare}(y \mid \bm{x}; \bm{\theta}_1^*, \bm{\theta}_2^*)&  \nonumber \\  
  &~~~~+ \int_{\mathcal{X}} d\bm{x} \int_0^{\infty} dy \int_0^y dz \rho(\bm{x}) g_{\scriptscriptstyle\square}(z, y \mid \bm{x}; \bm{\theta}_1^*, \bm{\theta}_2^*, \bm{\theta}_c^*) \log f_{\scriptscriptstyle\square}(z, y \mid \bm{x}; \bm{\theta}_1^*, \bm{\theta}_2^*).&  
\end{align} 
Consequently, from the equality condition in Jensen's inequality, the estimator of $(\bm{\theta}_1, \bm{\theta}_2)$ obtained by maximizing $\mathcal{E}(\bm{\theta}_1, \bm{\theta}_2)$ is asymptotically consistent.

\subsection{Proof of \Cref{mix20}} 
For $y = \tau$ and $v = 0$, the joint distribution of $(\bar{Y}, \bar{V})$ is    
\begin{align} 
  \Pr(\bar{Y} = \tau, \bar{V} = 0)   
  &= \Pr(\bar{Y} = \tau, \bar{W} = 0, \bar{V} = 0) + \Pr(\bar{Y} = \tau, \bar{W} = 1, \bar{V} = 0)& \nonumber  \\ 
  &= \Pr(\tau \leq C, \tau < T_1) + \Pr(\tau \leq C, T_1 \leq \tau, \tau < T_1 + T_2)& \nonumber  \\ 
  &= \int_{\tau}^{\infty} dc \int_{\tau}^{\infty} dt_1 f_1(t_1) f_c(c) 
   + \int_{\tau}^{\infty} dc \int_0^{\tau} dt_1 \int_{\tau - t_1}^{\infty} dt_2 f_1(t_1) f_2(t_2) f_c(c)& \nonumber  \\  
  &= S_1(\tau) S_c(\tau) + \int_0^{\tau} f_1(t_1) S_2(\tau - t_1) S_c(\tau) dt_1& \nonumber  \\   
  &= S_{\scriptscriptstyle\blacksquare}(\tau) S_c(\tau).  
\end{align} 
For $0 \leq y < \tau$ and $v = 0$, the joint distribution of $(\bar{Y}, \bar{V})$ is    
\begin{align} 
  \frac{\partial}{\partial y} \Pr(\bar{Y} \leq y, \bar{V} = 0) 
  = \frac{\partial}{\partial y} \Pr(Y \leq y, V = 0) 
  = S_{\scriptscriptstyle\blacksquare}(y) f_c(y).   
\end{align} 
For $0 \leq z \leq y \leq \tau$ and $w = v = 1$, the joint distribution of $(\bar{Z}, \bar{Y}, \bar{W}, \bar{V})$ is  
\begin{align} 
  \frac{\partial^2}{\partial y \partial z} \Pr(\bar{Z} \leq z, \bar{Y} \leq y, \bar{W} = 1, \bar{V} = 1) 
  = \frac{\partial^2}{\partial y \partial z} \Pr(Z \leq z, Y \leq y, V = 1) 
  = f_{\scriptscriptstyle\square}(z, y) S_c(y).   
\end{align}

\subsection{Proof of \Cref{loglikelihood_with}} 
Applying \Cref{mix20}, the log likelihood with administrative censoring is given by 
\begin{align} 
  &\log \prod_{i = 1}^n \bar{g}_{\scriptscriptstyle\blacksquare}(y_i \mid \bm{x}_i)^{1 - v_i} g_{\scriptscriptstyle\square}(z_i, y_i \mid \bm{x}_i)^{v_i}&  \nonumber \\     
  &= \log \prod_{i = 1}^n \bigl(S_{\scriptscriptstyle\blacksquare}(y_i \mid \bm{x}_i) S_c(y_i \mid \bm{x}_i)\bigr)^{(1 - v_i) \mathbb{I}(y_i = \tau)} \bigl(S_{\scriptscriptstyle\blacksquare}(y_i \mid \bm{x}_i) f_c(y_i \mid \bm{x}_i)\bigr)^{(1 - v_i) \mathbb{I}(y_i < \tau)} \bigl(f_{\scriptscriptstyle\square}(z_i, y_i \mid \bm{x}_i) S_c(y_i \mid \bm{x}_i)\bigr)^{v_i}&  \nonumber \\      
  &= \log \prod_{i = 1}^n S_{\scriptscriptstyle\blacksquare}(y_i \mid \bm{x}_i)^{1 - v_i} f_{\scriptscriptstyle\square}(z_i, y_i \mid \bm{x}_i)^{v_i} f_c(y_i \mid \bm{x}_i)^{1 - v_i^\dagger} S_c(y_i \mid \bm{x}_i)^{v_i^\dagger}&  \nonumber \\         
  &= \mathcal{E}(\bm{\theta}_1, \bm{\theta}_2) + \mathcal{L}_c^\dagger(\bm{\theta}_c).& 
\end{align}

\subsection{Proof of \Cref{ind20}} 
For $t_1, t_2 \in [0, \tau]$ satisfying $t_1 + t_2 \leq \tau$, the following holds:  
\begin{align}
f_1(t_1) = \frac{f_{\scriptscriptstyle\square}(t_1, t_1)}{f_2(0)},~~~f_2(t_2) = \frac{f_{\scriptscriptstyle\square}(0, t_2)}{f_1(0)}. 
\end{align} 
Considering the product of this equations, we have 
\begin{align}
\label{product} 
f_1(t_1) f_2(t_2) = \frac{f_{\scriptscriptstyle\square}(t_1, t_1) f_{\scriptscriptstyle\square}(0, t_2)}{f_1(0) f_2(0)} = \frac{f_{\scriptscriptstyle\square}(t_1, t_1) f_{\scriptscriptstyle\square}(0, t_2)}{f_{\scriptscriptstyle\square}(0, 0)}. 
\end{align} 
Therefore, from \Cref{known0}, 
if $f_{\scriptscriptstyle\square}(z, y \mid \bm{x}; \bm{\theta}_1, \bm{\theta}_2) = f_{\scriptscriptstyle\square}(z, y \mid \bm{x}; \bm{\theta}_1^\prime, \bm{\theta}_2^\prime)$ for all $0 \leq z \leq y \leq \tau$ and $\bm{x} \in \mathcal{X}$, 
then $(\bm{\theta}_1, \bm{\theta}_2) = (\bm{\theta}_1^\prime, \bm{\theta}_2^\prime)$. 
Replace the time interval $[0, \infty)$ and the densities $(g_{\scriptscriptstyle\blacksquare}, g_{\scriptscriptstyle\square})$ in the proof of \Cref{ind2} with the time interval $[0, \tau]$ and the densities $(\bar{g}_{\scriptscriptstyle\blacksquare}, \bar{g}_{\scriptscriptstyle\square})$, respectively. 
Then the estimator of $(\bm{\theta}_1, \bm{\theta}_2)$ obtained by maximizing $\mathcal{E}(\bm{\theta}_1, \bm{\theta}_2)$ is asymptotically consistent.

\subsection{Proof of \Cref{EMbelow}} 
The following holds: 
\begin{align} 
&\log \mathcal{I} \bigl[\exp\bigl(\mathcal{L}(\bm{\theta}_1 \mid \mathcal{D}_1) + \mathcal{L}(\bm{\theta}_2 \mid \mathcal{D}_2)\bigr)\bigr]& \nonumber \\  
&= \log \mathcal{I} \Bigl[\prod_{i = 1}^n f_1(z_i \mid \bm{x}_i)^{w_i} S_1(z_i \mid \bm{x}_i)^{1 - w_i} f_2(y_i - z_i \mid \bm{x}_i)^{v_i} S_2(y_i - z_i \mid \bm{x}_i)^{1 - v_i}\Bigr]& \nonumber \\ 
&= \log \mathcal{I} \Bigl[\prod_{i = 1}^m f_1(z_i \mid \bm{x}_i)^{w_i} S_1(z_i \mid \bm{x}_i)^{1 - w_i} S_2(y_i - z_i \mid \bm{x}_i)\Bigr] \prod_{i = m + 1}^n f_1(z_i \mid \bm{x}_i) f_2(y_i - z_i \mid \bm{x}_i)& \nonumber \\ 
&= \log \prod_{i = 1}^m \Bigl(S_1(y_i \mid \bm{x}_i) + \int_{0}^{y_i} f_1(z_i \mid \bm{x}_i) S_2(y_i - z_i \mid \bm{x}_i) dz_i\Bigr) \prod_{i = m + 1}^n f_1(z_i \mid \bm{x}_i) f_2(y_i - z_i \mid \bm{x}_i)& \nonumber \\ 
&= \log \prod_{i = 1}^m S_{\scriptscriptstyle\blacksquare}(y_i \mid \bm{x}_i) \prod_{i = m + 1}^n f_{\scriptscriptstyle\square}(z_i, y_i \mid \bm{x}_i)& \nonumber \\ 
&= \mathcal{E}(\bm{\theta}_1, \bm{\theta}_2).& 
\end{align} 
Therefore, applying Jensen's inequality, we have 
\begin{align} 
\mathcal{E}(\bm{\theta}_1, \bm{\theta}_2)      
&= \log \mathcal{I} \Bigl[\frac{\exp\bigl(\mathcal{L}(\bm{\theta}_1 \mid \mathcal{D}_1) + \mathcal{L}(\bm{\theta}_2 \mid \mathcal{D}_2)\bigr)}{q} q\Bigr]&  \nonumber \\    
&\geq \mathcal{I} \bigl[\bigl(\mathcal{L}(\bm{\theta}_1 \mid \mathcal{D}_1) + \mathcal{L}(\bm{\theta}_2 \mid \mathcal{D}_2) - \log q\bigr) q\bigr]&  \nonumber \\  
&= \mathcal{I} \Bigl[q \log \frac{\prod_{i = 1}^m f_1(z_i \mid \bm{x}_i)^{w_i} S_1(z_i \mid \bm{x}_i)^{1 - w_i} S_2(y_i - z_i \mid \bm{x}_i)}{q}\Bigr] + \log \prod_{i = m + 1}^n f_{\scriptscriptstyle\square}(z_i, y_i \mid \bm{x}_i).& 
\end{align} 
Additionally, for each $1 \leq i \leq m$, the following holds: 
\begin{align} 
&\mathcal{I}_i\bigl[f_1(z_i \mid \bm{x}_i)^{w_i} S_1(z_i \mid \bm{x}_i)^{1 - w_i} S_2(y_i - z_i \mid \bm{x}_i)\bigr]& \nonumber \\      
&= \bigl[f_1(z_i \mid \bm{x}_i)^{w_i} S_1(z_i \mid \bm{x}_i)^{1 - w_i} S_2(y_i - z_i \mid \bm{x}_i)\bigr] \big|_{z_i = y_i, w_i = 0} 
   + \int_{0}^{y_i} \bigl[f_1(z_i \mid \bm{x}_i)^{w_i} S_1(z_i \mid \bm{x}_i)^{1 - w_i} S_2(y_i - z_i \mid \bm{x}_i)\bigr] \big|_{w_i = 1} dz_i& \nonumber \\   
&= S_1(y_i \mid \bm{x}_i) + \int_0^{y_i} f_1(z_i \mid \bm{x}_i) S_2(y_i - z_i \mid \bm{x}_i) dz_i& \nonumber \\  
&= S_{\scriptscriptstyle\blacksquare}(y_i \mid \bm{x}_i).&  
\end{align} 
Consequently, according to the equality condition in Jensen's inequality, the lower bound defined in \Cref{LB} is maximized when, for each $1 \leq i \leq m$, the following holds:  
\begin{align} 
q_i(z_i, w_i) = 
\begin{cases} 
S_{\scriptscriptstyle\blacksquare}(y_i \mid \bm{x}_i)^{-1} S_1(z_i \mid \bm{x}_i) & (w_i = 0), \\
S_{\scriptscriptstyle\blacksquare}(y_i \mid \bm{x}_i)^{-1} f_1(z_i \mid \bm{x}_i) S_2(y_i - z_i \mid \bm{x}_i) & (w_i = 1). 
\end{cases} 
\end{align}

\subsection{Proof of \Cref{two_stage_consistency}} 
By \Cref{ind2}, under \Cref{f_identifiability}, we have  
\begin{align} 
(\hat{\bm{\theta}}_{1,1}, \hat{\bm{\theta}}_{2,1}) \xrightarrow{p} (\bm{\theta}_{1,1}^*, \bm{\theta}_{2,1}^*)  \qquad \text{as } n_s \to \infty. 
\end{align}  
Therefore, we obtain 
\begin{align} 
\sup_{\bm{\theta}_{1,2}, \bm{\theta}_{2,2}} \bigl|\mathcal{E}_t((\hat{\bm{\theta}}_{1,1}, \bm{\theta}_{1,2}), (\hat{\bm{\theta}}_{2,1}, \bm{\theta}_{2,2})) - \mathcal{E}_t((\bm{\theta}_{1,1}^*, \bm{\theta}_{1,2}), (\bm{\theta}_{2,1}^*, \bm{\theta}_{2,2}))\bigr| \xrightarrow{p} 0 \qquad \text{as } n_s \to \infty. 
\end{align}  
Here, under \Cref{transfer_condition}, \Cref{known0} holds for the remaining components $(\bm{\theta}_{1,2}, \bm{\theta}_{2,2})$ once the transferable components $(\bm{\theta}_{1,1}, \bm{\theta}_{2,1})$ are known.  
Consequently, by \Cref{ind20}, the two-stage estimator $((\hat{\bm{\theta}}_{1,1}, \hat{\bm{\theta}}_{1,2}), (\hat{\bm{\theta}}_{2,1}, \hat{\bm{\theta}}_{2,2}))$ is asymptotically consistent as both the source and target sample sizes tend to infinity.

\subsection{Proof of \Cref{MLE}} 
Let $\eta$ be defined as 
\begin{align} 
\eta(\bm{\gamma}) 
&\coloneqq \sum_{i = 1}^m \frac{\partial \phi(\bm{x}_i; \bm{\gamma})}{\partial \bm{\gamma}} \int_0^{y_i} h_b(t \mid \bm{x}_i) S_2(y_i - t \mid \bm{x}_i) dt& \nonumber \\  
&~~~~- \sum_{i = 1}^m \frac{\frac{\partial \phi(\bm{x}_i; \bm{\gamma})}{\partial \bm{\gamma}} \int_0^{y_i} \bigl(1 + \phi(\bm{x}_i; \bm{\gamma}) \int_t^{y_i} h_b(s \mid \bm{x}_i) ds\bigr) h_b(t \mid \bm{x}_i) \exp \bigl(\int_t^{y_i} h_b(s \mid \bm{x}_i) \phi(\bm{x}_i; \bm{\gamma}) ds\bigr) S_2(y_i - t \mid \bm{x}_i) dt}{1 + \int_0^{y_i} h_b(t \mid \bm{x}_i) \phi(\bm{x}_i; \bm{\gamma}) \exp \bigl(\int_t^{y_i} h_b(s \mid \bm{x}_i) \phi(\bm{x}_i; \bm{\gamma}) ds\bigr) S_2(y_i - t \mid \bm{x}_i) dt}.&   
\end{align}  
Then the gradient of $\mathcal{E}_t((\hat{\bm{\alpha}}, \bm{\gamma}), \hat{\bm{\theta}}_2)$ can be represented as 
\begin{align} 
\label{Egrad}
&\frac{\partial}{\partial \bm{\gamma}} \mathcal{E}_t((\hat{\bm{\alpha}}, \bm{\gamma}), \hat{\bm{\theta}}_2)& \nonumber \\   
&= \frac{\partial}{\partial \bm{\gamma}}\Bigl(\sum_{i = 1}^m \log S_{\scriptscriptstyle\blacksquare}(y_i \mid \bm{x}_i) + \sum_{i = m + 1}^n \log f_1(z_i \mid \bm{x}_i) f_2(y_i - z_i \mid \bm{x}_i)\Bigr)& \nonumber \\ 
&= \frac{\partial}{\partial \bm{\gamma}}\Bigl(\sum_{i = 1}^m \log \Bigl(S_1(y_i \mid \bm{x}_i) + \int_0^{y_i} f_1(t \mid \bm{x}_i) S_2(y_i - t \mid \bm{x}_i) dt\Bigr) + \sum_{i = m + 1}^n \log f_1(z_i \mid \bm{x}_i)\Bigr)& \nonumber \\ 
&= \frac{\partial}{\partial \bm{\gamma}}\Bigl(\sum_{i = 1}^m \Bigl(\log S_1(y_i \mid \bm{x}_i) + \log \Bigl(1 + \int_0^{y_i} \frac{f_1(t \mid \bm{x}_i) S_2(y_i - t \mid \bm{x}_i)}{S_1(y_i \mid \bm{x}_i)} dt\Bigr)\Bigr) + \sum_{i = m + 1}^n \log f_1(z_i \mid \bm{x}_i)\Bigr)& \nonumber \\ 
&= \frac{\partial}{\partial \bm{\gamma}} \sum_{i = 1}^m \Bigl(- \int_0^{y_i} h_b(t \mid \bm{x}_i) \phi(\bm{x}_i; \bm{\gamma}) dt + \log \Bigl(1 + \int_0^{y_i} h_b(t \mid \bm{x}_i) \phi(\bm{x}_i; \bm{\gamma}) \exp \Bigl(\int_t^{y_i} h_b(s \mid \bm{x}_i) \phi(\bm{x}_i; \bm{\gamma}) ds\Bigr) S_2(y_i - t \mid \bm{x}_i) dt\Bigr)\Bigr)& \nonumber \\ 
&~~~~ + \frac{\partial}{\partial \bm{\gamma}} \sum_{i = m + 1}^n \Bigl(\log h_b(z_i \mid \bm{x}_i) \phi(\bm{x}_i; \bm{\gamma}) - \int_0^{z_i} h_b(t \mid \bm{x}_i) \phi(\bm{x}_i; \bm{\gamma}) dt\Bigr)& \nonumber \\   
&= \sum_{i = 1}^n \Bigl(v_i \frac{\partial \log \phi(\bm{x}_i; \bm{\gamma})}{\partial \bm{\gamma}} - \int_0^{\tilde{y}_i} h_b(t \mid \bm{x}_i) \frac{\partial \phi(\bm{x}_i; \bm{\gamma})}{\partial \bm{\gamma}} dt\Bigr)& \nonumber \\    
&~~~~ + \sum_{i = 1}^m \frac{\frac{\partial \phi(\bm{x}_i; \bm{\gamma})}{\partial \bm{\gamma}} \int_0^{y_i} \bigl(1 + \phi(\bm{x}_i; \bm{\gamma}) \int_t^{y_i} h_b(s \mid \bm{x}_i) ds\bigr) h_b(t \mid \bm{x}_i) \exp \bigl(\int_t^{y_i} h_b(s \mid \bm{x}_i) \phi(\bm{x}_i; \bm{\gamma}) ds\bigr) S_2(y_i - t \mid \bm{x}_i) dt}{1 + \int_0^{y_i} h_b(t \mid \bm{x}_i) \phi(\bm{x}_i; \bm{\gamma}) \exp \bigl(\int_t^{y_i} h_b(s \mid \bm{x}_i) \phi(\bm{x}_i; \bm{\gamma}) ds\bigr) S_2(y_i - t \mid \bm{x}_i) dt}& \nonumber \\    
&= \bm{\zeta}(\bm{\gamma}) + \sum_{i = 1}^m \frac{\frac{\partial \phi(\bm{x}_i; \bm{\gamma})}{\partial \bm{\gamma}} \int_0^{y_i} \bigl(1 + \phi(\bm{x}_i; \bm{\gamma}) \int_t^{y_i} h_b(s \mid \bm{x}_i) ds\bigr) h_b(t \mid \bm{x}_i) \exp \bigl(\int_t^{y_i} h_b(s \mid \bm{x}_i) \phi(\bm{x}_i; \bm{\gamma}) ds\bigr) S_2(y_i - t \mid \bm{x}_i) dt}{1 + \int_0^{y_i} h_b(t \mid \bm{x}_i) \phi(\bm{x}_i; \bm{\gamma}) \exp \bigl(\int_t^{y_i} h_b(s \mid \bm{x}_i) \phi(\bm{x}_i; \bm{\gamma}) ds\bigr) S_2(y_i - t \mid \bm{x}_i) dt}& \nonumber \\    
&= \tilde{\bm{\zeta}}(\bm{\gamma}) - \eta(\bm{\gamma}).& 
\end{align}  
We analyze $\eta(\bm{\gamma})$ as follows: 
\begin{align} 
&|\eta(\bm{\gamma})|& \nonumber \\    
&\leq \sum_{i = 1}^m \biggl|\frac{\partial \phi(\bm{x}_i; \bm{\gamma})}{\partial \bm{\gamma}} \Bigl(\int_0^{y_i} h_b(t \mid \bm{x}_i) S_2(y_i - t \mid \bm{x}_i) dt\Bigr) \Bigl(1 + \int_0^{y_i} h_b(t \mid \bm{x}_i) \phi(\bm{x}_i; \bm{\gamma}) \exp \Bigl(\int_t^{y_i} h_b(s \mid \bm{x}_i) \phi(\bm{x}_i; \bm{\gamma}) ds\Bigr) S_2(y_i - t \mid \bm{x}_i) dt\Bigr)& \nonumber \\           
&~~~~~~- \int_0^{y_i} \frac{\partial \phi(\bm{x}_i; \bm{\gamma})}{\partial \bm{\gamma}} \Bigl(1 + \phi(\bm{x}_i; \bm{\gamma}) \int_t^{y_i} h_b(s \mid \bm{x}_i) ds\Bigr) h_b(t \mid \bm{x}_i) \exp \Bigl(\int_t^{y_i} h_b(s \mid \bm{x}_i) \phi(\bm{x}_i; \bm{\gamma}) ds\Bigr) S_2(y_i - t \mid \bm{x}_i) dt\biggr|& \nonumber \\  
&\leq \sum_{i = 1}^m \Bigl|\frac{\partial \phi(\bm{x}_i; \bm{\gamma})}{\partial \bm{\gamma}}\Bigr| \Bigl\{\int_0^{y_i} \Bigl(\exp \Bigl(\int_t^{y_i} h_b(s \mid \bm{x}_i) \phi(\bm{x}_i; \bm{\gamma}) ds\Bigr) - 1\Bigr) h_b(t \mid \bm{x}_i) S_2(y_i - t \mid \bm{x}_i) dt& \nonumber \\           
&~~+ \int_0^{y_i} \Bigl(\int_t^{y_i} h_b(s \mid \bm{x}_i) ds + \int_0^{y_i} h_b(s \mid \bm{x}_i) S_2(y_i - s \mid \bm{x}_i) ds\Bigr) h_b(t \mid \bm{x}_i) \phi(\bm{x}_i; \bm{\gamma}) \exp \Bigl(\int_t^{y_i} h_b(s \mid \bm{x}_i) \phi(\bm{x}_i; \bm{\gamma}) ds\Bigr) S_2(y_i - t \mid \bm{x}_i) dt\Bigr\}& \nonumber \\ 
&\leq \sum_{i = 1}^m \Bigl|\frac{\partial \phi(\bm{x}_i; \bm{\gamma})}{\partial \bm{\gamma}}\Bigr| \Bigl\{\int_0^{y_i} \Bigl(\exp \Bigl(\frac{\nu_i (y_i - t)}{\phi(\bm{x}_i; \hat{\bm{\gamma}})} \phi(\bm{x}_i; \bm{\gamma})\Bigr) - 1\Bigr) \frac{\nu_i}{\phi(\bm{x}_i; \hat{\bm{\gamma}})} S_2(y_i - t \mid \bm{x}_i) dt& \nonumber \\           
&~~+ \int_0^{y_i} \Bigl(\frac{\nu_i (y_i - t)}{\phi(\bm{x}_i; \hat{\bm{\gamma}})} + \int_0^{y_i} \frac{\nu_i}{\phi(\bm{x}_i; \hat{\bm{\gamma}})} S_2(y_i - s \mid \bm{x}_i) ds\Bigr) \frac{\nu_i}{\phi(\bm{x}_i; \hat{\bm{\gamma}})} \phi(\bm{x}_i; \bm{\gamma}) \exp \Bigl(\frac{\nu_i (y_i - t)}{\phi(\bm{x}_i; \hat{\bm{\gamma}})} \phi(\bm{x}_i; \bm{\gamma})\Bigr) S_2(y_i - t \mid \bm{x}_i) dt\Bigr\}& \nonumber \\ 
&\leq \sum_{i = 1}^m \Bigl|\frac{\partial \phi(\bm{x}_i; \bm{\gamma})}{\partial \bm{\gamma}}\Bigr| \Bigl\{\int_0^{y_i - \delta_i} \Bigl(\exp \Bigl(\frac{\nu_i (y_i - t)}{\phi(\bm{x}_i; \hat{\bm{\gamma}})} \phi(\bm{x}_i; \bm{\gamma})\Bigr) - 1\Bigr) \frac{\nu_i}{\phi(\bm{x}_i; \hat{\bm{\gamma}})} S_2(y_i - t \mid \bm{x}_i) dt& \nonumber \\ 
&~~+ \int_{y_i - \delta_i}^{y_i} \Bigl(\exp \Bigl(\frac{\nu_i (y_i - t)}{\phi(\bm{x}_i; \hat{\bm{\gamma}})} \phi(\bm{x}_i; \bm{\gamma})\Bigr) - 1\Bigr) \frac{\nu_i}{\phi(\bm{x}_i; \hat{\bm{\gamma}})} S_2(y_i - t \mid \bm{x}_i) dt& \nonumber \\           
&~~+ \int_0^{y_i - \delta_i} \Bigl(\frac{\nu_i (y_i - t)}{\phi(\bm{x}_i; \hat{\bm{\gamma}})} + \int_0^{y_i} \frac{\nu_i}{\phi(\bm{x}_i; \hat{\bm{\gamma}})} S_2(y_i - s \mid \bm{x}_i) ds\Bigr) \frac{\nu_i}{\phi(\bm{x}_i; \hat{\bm{\gamma}})} \phi(\bm{x}_i; \bm{\gamma}) \exp \Bigl(\frac{\nu_i (y_i - t)}{\phi(\bm{x}_i; \hat{\bm{\gamma}})} \phi(\bm{x}_i; \bm{\gamma})\Bigr) S_2(y_i - t \mid \bm{x}_i) dt& \nonumber \\ 
&~~+ \int_{y_i - \delta_i}^{y_i} \Bigl(\frac{\nu_i (y_i - t)}{\phi(\bm{x}_i; \hat{\bm{\gamma}})} + \int_0^{y_i} \frac{\nu_i}{\phi(\bm{x}_i; \hat{\bm{\gamma}})} S_2(y_i - s \mid \bm{x}_i) ds\Bigr) \frac{\nu_i}{\phi(\bm{x}_i; \hat{\bm{\gamma}})} \phi(\bm{x}_i; \bm{\gamma}) \exp \Bigl(\frac{\nu_i (y_i - t)}{\phi(\bm{x}_i; \hat{\bm{\gamma}})} \phi(\bm{x}_i; \bm{\gamma})\Bigr) S_2(y_i - t \mid \bm{x}_i) dt\Bigr\}.&  
\end{align} 
By the definition of $\delta_i$, we have  
\begin{align} 
\delta_i = \int_\delta^{y_i} \frac{S_2(t \mid \bm{x}_i; \hat{\bm{\theta}}_2)}{\exp (- \nu_i t)} dt = \int_0^{y_i - \delta} \frac{S_2(y_i - t \mid \bm{x}_i; \hat{\bm{\theta}}_2)}{\exp (- (y_i - t) \nu_i)} dt \geq \int_0^{y_i - \delta} S_2(y_i - t \mid \bm{x}_i; \hat{\bm{\theta}}_2) dt. 
\end{align} 
By substituting $\bm{\gamma}$ with $\hat{\bm{\gamma}}$, we have 
\begin{align} 
|\eta(\hat{\bm{\gamma}})|
&\leq \sum_{i = 1}^m \bm{\pi}_i \Bigl\{\bigl(\exp (\nu_i y_i) - 1\bigr) \nu_i \delta_i + \bigl(\exp (\nu_i \delta_i) - 1\bigr) \nu_i \delta_i& \nonumber \\           
&~~+ \Bigl(y_i + \int_0^{y_i} S_2(y_i - s \mid \bm{x}_i) ds\Bigr) \nu_i^2 \delta_i + \Bigl(\delta_i + \int_0^{y_i} S_2(y_i - s \mid \bm{x}_i) ds\Bigr) \nu_i^2 \delta_i \exp \bigl(\nu_i \delta_i\bigr)\Bigr\}& \nonumber \\ 
&= \sum_{i = 1}^m \bm{\pi}_i \Bigl\{\bigl(\exp (\nu_i y_i) - 1\bigr) \nu_i \delta_i + \bigl(\exp (\nu_i \delta_i) - 1\bigr) \nu_i \delta_i + \nu_i^2 y_i \delta_i + \nu_i^2 \delta_i^2 \exp \bigl(\nu_i \delta_i\bigr)& \nonumber \\           
&~~~~~~+ \Bigl(\int_0^{y_i - \delta_i} S_2(y_i - s \mid \bm{x}_i) ds + \int_{y_i - \delta_i}^{y_i} S_2(y_i - s \mid \bm{x}_i) ds\Bigr) \bigl(\nu_i^2 \delta_i + \nu_i^2 \delta_i \exp (\nu_i \delta_i)\bigr)\Bigr\}& \nonumber \\ 
&\leq \sum_{i = 1}^m \bm{\pi}_i \Bigl\{\bigl(\exp (\nu_i y_i) - 1\bigr) \nu_i \delta_i + \bigl(\exp (\nu_i \delta_i) - 1\bigr) \nu_i \delta_i + \nu_i^2 y_i \delta_i + \nu_i^2 \delta_i^2 \exp \bigl(\nu_i \delta_i\bigr) 
        + 2 \bigl(1 + \exp \bigl(\nu_i \delta_i\bigr)\bigr) \nu_i^2 \delta_i^2\Bigr\}.& 
\end{align} 
The order of this bound is given by 
\begin{align} 
\mathcal{O}(\eta(\hat{\bm{\gamma}}))               
&= \mathcal{O}\Biggl(\sum_{i = 1}^m \Bigl(\bigl(\exp (\nu_i y_i) - 1\bigr) \nu_i \delta_i + \bigl(\exp (\nu_i \delta_i) - 1\bigr) \nu_i \delta_i + \nu_i^2 y_i \delta_i + \nu_i^2 \delta_i^2 \exp \bigl(\nu_i \delta_i\bigr)        
                + 2 \bigl(1 + \exp \bigl(\nu_i \delta_i\bigr)\bigr) \nu_i^2 \delta_i^2\Bigr) \bm{\pi}_i\Biggr)& \nonumber \\ 
&= \mathcal{O}\Biggl(\sum_{i = 1}^m \Bigl(\nu_i^2 y_i \delta_i + \nu_i^2 \delta_i^2 + \nu_i^2 y_i \delta_i + \nu_i^2 \delta_i^2 \bigl(1 + \nu_i \delta_i\bigr) + 2 \bigl(2 + \nu_i \delta_i\bigr) \nu_i^2 \delta_i^2\Bigr) \bm{\pi}_i\Biggr)& \nonumber \\  
&= \mathcal{O}\Bigl(\sum_{i = 1}^m \bigl(\nu_i^2 y_i \delta_i + \nu_i^2 \delta_i^2 + \nu_i^3 \delta_i^3\bigr) \bm{\pi}_i\Bigr)& \nonumber \\  
&= \mathcal{O}\Biggl(\sum_{i = 1}^m \nu_i^2 \delta_i y_i \bm{\pi}_i\Biggr).& 
\end{align}  
Consequently, we have 
\begin{align} 
\tilde{\bm{\zeta}}(\hat{\bm{\gamma}}) = \mathcal{O}\Bigl(\sum_{i = 1}^m \nu_i^2 \delta_i y_i \bm{\pi}_i\Bigr). 
\end{align}

\section{Alternative Problem Setting} 

\subsection{Double Censoring} 
\label{APS}  
Let $\check{Z} \coloneqq \min\{T_1, C_1\}, \check{Y} \coloneqq \min\{T_1 + T_2, C_1 + C_2\}, \check{W} \coloneqq \mathbb{I}(T_1 \leq C_1)$, and $\check{V} \coloneqq \mathbb{I}(T_1 + T_2 \leq C_1 + C_2)$, 
where $C_1$ and $C_2$ are non-negative random variables. 
Assume that $T_1$, $T_2$, and $(C_1, C_2)$ are mutually independent. 
Define $f_{c_1,c_2}$ as the joint distribution of $(C_1, C_2)$. 
\begin{proposition} 
\label{alternative} 
For $0 \leq y$ and $v = 0$, the joint distribution of $(\check{Y}, \check{V})$ is 
\begin{align} 
  S_{\scriptscriptstyle\blacksquare}(y) \int_0^y f_{c_1,c_2}(c_1, y - c_1) dc_1. 
\end{align} 
For $0 \leq z \leq y$ and $(w, v) = (0, 1)$, the joint distribution of $(\check{Z}, \check{Y}, \check{W}, \check{V})$ is 
\begin{align} 
  \int_z^y f_{\scriptscriptstyle\square}(t, y) dt \int_{y - z}^{\infty} f_{c_1,c_2}(z, c_2) dc_2. 
\end{align} 
For $0 \leq z \leq y$ and $(w, v) = (1, 1)$, the joint distribution of $(\check{Z}, \check{Y}, \check{W}, \check{V})$ is 
\begin{align} 
  f_{\scriptscriptstyle\square}(z, y) \int_z^{\infty} dc_1 \int_{\max\{y - c_1, 0\}}^{\infty} dc_2 f_{c_1,c_2}(c_1, c_2).  
\end{align} 
\end{proposition}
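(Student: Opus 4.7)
The plan is to follow the template used in the proof of \cref{mixture}, extending each step to accommodate the joint censoring structure. For each case, I would express the joint probability as a multiple integral over realizations $(t_1, t_2, c_1, c_2)$ of the four independent components, with integrand equal to $f_1(t_1) f_2(t_2) f_{c_1,c_2}(c_1, c_2)$ by the mutual independence assumption, and then differentiate with respect to the relevant coordinates to extract the density.

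For the case $v = 0$, I would use independence of $(T_1, T_2)$ and $(C_1, C_2)$ to write
\begin{align}
\Pr(\tilde{Y} \leq y, \tilde{V} = 0) = \iint_{c_1 + c_2 \leq y} f_{c_1,c_2}(c_1, c_2) \Pr(T_1 + T_2 > c_1 + c_2) dc_1 dc_2.
\end{align}
Recognizing $\Pr(T_1 + T_2 > s) = S_1(s) + \int_0^s f_1(t) S_2(s-t) dt = S_{\circ}(s)$ via the same convolution identity used in the proof of \cref{mixture} for $v = 0$, and differentiating the region $\{c_1 + c_2 \leq y\}$ with respect to $y$ (which reduces the area integral to a line integral along $c_2 = y - c_1$ for $c_1 \in [0, y]$), yields $S_{\circ}(y) \int_0^y f_{c_1,c_2}(c_1, y - c_1) dc_1$.

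For $(w, v) = (0, 1)$, the underlying event is $\{C_1 \leq z, T_1 + T_2 \leq y, T_1 > C_1, T_1 + T_2 \leq C_1 + C_2\}$. Writing the probability as a quadruple integral and applying $\partial^2 / \partial z \partial y$ localizes $c_1 = z$ and $t_1 + t_2 = y$; the constraint $T_1 > C_1$ then restricts $t_1 \in (z, y]$ (using $t_2 \geq 0$) and $C_1 + C_2 \geq y$ requires $c_2 \geq y - z$. This reproduces $\int_z^y f_{\circ}(t, y) dt \int_{y-z}^{\infty} f_{c_1,c_2}(z, c_2) dc_2$. For $(w, v) = (1, 1)$, the analogous calculation with $t_1 = z$, $t_2 = y - z$ fixed by the derivatives leaves $(c_1, c_2)$ subject to $c_1 \geq z$ and $c_1 + c_2 \geq y$; since $c_2 \geq 0$ automatically, the lower limit on $c_2$ is $\max\{y - c_1, 0\}$, giving the stated formula.

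The main technical obstacle is that the dependence between $C_1$ and $C_2$ prevents the clean factorization available in \cref{mixture}, so one must carry $f_{c_1,c_2}$ through each calculation and track the joint density evaluated along the correct integration boundaries. A secondary subtlety is the $\max\{y - c_1, 0\}$ in the $(1,1)$ case, which arises precisely because censoring times are non-negative: when $c_1 \geq y$ the constraint $c_1 + c_2 \geq y$ imposes no additional restriction on $c_2$. Beyond these two points, each derivation is a routine differentiation of the survival-type expressions developed in \cref{mix} and \cref{mixture}.
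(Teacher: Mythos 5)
Your proposal is correct and follows essentially the same route as the paper: in each case the joint probability is written as a multiple integral of $f_1(t_1) f_2(t_2) f_{c_1,c_2}(c_1,c_2)$ over the appropriate event region and then differentiated in the relevant coordinates, with the $v=0$ case reducing to the convolution identity $\Pr(T_1+T_2>s)=S_{\circ}(s)$ and the $(1,1)$ case producing the $\max\{y-c_1,0\}$ lower limit from non-negativity of $C_2$. The only cosmetic difference is that you recognize $S_{\circ}$ up front by conditioning on $(C_1,C_2)$ in the $v=0$ case, whereas the paper carries the quadruple integral through explicitly; the substance is identical.
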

\begin{proof} 
For $0 \leq y$ and $v = 0$, the following holds: 
\begin{align} 
  \frac{\partial}{\partial y} \Pr(\check{Y} \leq y, \check{V} = v)  
  &= \frac{\partial}{\partial y} \Pr(C_1 + C_2 \leq y, C_1 + C_2 < T_1 + T_2)& \nonumber  \\ 
  &= \frac{\partial}{\partial y} \int_0^{\infty} dc_1 \int_0^{\max \{y - c_1, 0\}} dc_2 \int_0^{\infty} dt_1 \int_{\max \{c_1 + c_2 - t_1, 0\}}^{\infty} dt_2 f_1(t_1) f_2(t_2) f_{c_1,c_2}(c_1, c_2)& \nonumber  \\  
  &= \frac{\partial}{\partial y} \int_0^y dc_1 \int_0^{y - c_1} dc_2 \int_0^{c_1 + c_2} dt_1 \int_{c_1 + c_2 - t_1}^{\infty} dt_2 f_1(t_1) f_2(t_2) f_{c_1,c_2}(c_1, c_2)& \nonumber  \\  
  &~~~~ + \frac{\partial}{\partial y} \int_0^y dc_1 \int_0^{y - c_1} dc_2 \int_{c_1 + c_2}^{\infty} dt_1 \int_0^{\infty} dt_2 f_1(t_1) f_2(t_2) f_{c_1,c_2}(c_1, c_2)& \nonumber  \\  
  &= \int_0^y dc_1 \int_0^y dt_1 \int_{y - t_1}^{\infty} dt_2 f_1(t_1) f_2(t_2) f_{c_1,c_2}(c_1, y - c_1)& \nonumber  \\  
  &~~~~ + \int_0^y dc_1 \int_y^{\infty} dt_1 \int_0^{\infty} dt_2 f_1(t_1) f_2(t_2) f_{c_1,c_2}(c_1, y - c_1)& \nonumber  \\  
  &= S_{\scriptscriptstyle\blacksquare}(y) \int_0^y f_{c_1,c_2}(c_1, y - c_1) dc_1.&  
\end{align} 
For $0 \leq z \leq y$ and $(w, v) = (0, 1)$, the following holds: 
\begin{align} 
  &\frac{\partial^2}{\partial y \partial z} \Pr(\check{Z} \leq z, \check{Y} \leq y, \check{W} = w, \check{V} = v)& \nonumber  \\  
  &= \frac{\partial^2}{\partial y \partial z} \Pr(C_1 \leq z, T_1 + T_2 \leq y, C_1 < T_1, T_1 + T_2 \leq C_1 + C_2)& \nonumber  \\ 
  &= \frac{\partial^2}{\partial y \partial z} \int_0^z dc_1 \int_{c_1}^{\infty} dt_1 \int_0^{\max \{y - t_1, 0\}} dt_2 \int_{t_1 + t_2 - c_1}^{\infty} dc_2 f_1(t_1) f_2(t_2) f_{c_1,c_2}(c_1, c_2)& \nonumber  \\  
  &= \frac{\partial^2}{\partial y \partial z} \int_0^z dc_1 \int_{c_1}^y dt_1 \int_0^{y - t_1} dt_2 \int_{t_1 + t_2 - c_1}^{\infty} dc_2 f_1(t_1) f_2(t_2) f_{c_1,c_2}(c_1, c_2)& \nonumber  \\  
  &= \frac{\partial}{\partial y} \int_z^y dt_1 \int_0^{y - t_1} dt_2 \int_{t_1 + t_2 - z}^{\infty} dc_2 f_1(t_1) f_2(t_2) f_{c_1,c_2}(z, c_2)& \nonumber  \\  
  &= \int_z^y f_{\scriptscriptstyle\square}(t, y) dt \int_{y - z}^{\infty} f_{c_1,c_2}(z, c_2) dc_2.& 
\end{align} 
For $0 \leq z \leq y$ and $(w, v) = (1, 1)$, the following holds: 
\begin{align} 
  &\frac{\partial^2}{\partial y \partial z} \Pr(\check{Z} \leq z, \check{Y} \leq y, \check{W} = w, \check{V} = v)& \nonumber  \\  
  &= \frac{\partial^2}{\partial y \partial z} \Pr(T_1 \leq z, T_1 + T_2 \leq y, T_1 \leq C_1, T_1 + T_2 \leq C_1 + C_2)& \nonumber  \\ 
  &= \frac{\partial^2}{\partial y \partial z} \int_0^z dt_1 \int_0^{y - t_1} dt_2 \int_{t_1}^{\infty} dc_1 \int_{\max \{t_1 + t_2 - c_1, 0\}}^{\infty} dc_2 f_1(t_1) f_2(t_2) f_{c_1,c_2}(c_1, c_2)& \nonumber  \\  
  &= \frac{\partial^2}{\partial y \partial z} \int_0^z dt_1 \int_0^{y - t_1} dt_2 \int_{t_1}^{t_1 + t_2} dc_1 \int_{t_1 + t_2 - c_1}^{\infty} dc_2 f_1(t_1) f_2(t_2) f_{c_1,c_2}(c_1, c_2)& \nonumber  \\  
  &~~~~ + \frac{\partial^2}{\partial y \partial z} \int_0^z dt_1 \int_0^{y - t_1} dt_2 \int_{t_1 + t_2}^{\infty} dc_1 \int_0^{\infty} dc_2 f_1(t_1) f_2(t_2) f_{c_1,c_2}(c_1, c_2)& \nonumber  \\  
  &= \frac{\partial}{\partial y} \int_0^{y - z} dt_2 \int_z^{z + t_2} dc_1 \int_{z + t_2 - c_1}^{\infty} dc_2 f_1(z) f_2(t_2) f_{c_1,c_2}(c_1, c_2)  
  + \frac{\partial}{\partial y} \int_0^{y - z} dt_2 \int_{z + t_2}^{\infty} dc_1 \int_0^{\infty} dc_2 f_1(z) f_2(t_2) f_{c_1,c_2}(c_1, c_2)& \nonumber  \\  
  &= \int_z^y dc_1 \int_{y - c_1}^{\infty} dc_2 f_1(z) f_2(y - z) f_{c_1,c_2}(c_1, c_2) + \int_y^{\infty} dc_1 \int_0^{\infty} dc_2 f_1(z) f_2(y - z) f_{c_1,c_2}(c_1, c_2)& \nonumber  \\  
  &= f_{\scriptscriptstyle\square}(z, y) \int_z^{\infty} dc_1 \int_{\max\{y - c_1, 0\}}^{\infty} dc_2 f_{c_1,c_2}(c_1, c_2).& 
\end{align} 
\end{proof} 
\begin{proposition} 
Let $\check{\mathcal{E}}$ be defined as 
\begin{align} 
  \check{\mathcal{E}}(\bm{\theta}_1, \bm{\theta}_2) \coloneqq \sum_{i=1}^m \log S_{\scriptscriptstyle\blacksquare}(y_i \mid \bm{x}_i) + \sum_{i = m + 1}^n \Bigl(w_i \log f_{\scriptscriptstyle\square}(z_i, y_i \mid \bm{x}_i) + (1 - w_i) \log \int_{z_i}^{y_i} f_{\scriptscriptstyle\square}(t, y_i \mid \bm{x}_i) dt\Bigr). 
\end{align} 
Under \Cref{f_identifiability}, the estimator of $(\bm{\theta}_1, \bm{\theta}_2)$ obtained by maximizing $\check{\mathcal{E}}(\bm{\theta}_1, \bm{\theta}_2)$ without administrative censoring is asymptotically consistent. 
\end{proposition}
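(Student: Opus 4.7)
The plan is to mirror the Jensen-inequality strategy used in the proof of \cref{ind2}, now adapted to the joint distributions derived in \cref{alternative}. The first step is to observe that each of the three cases admits a factorization into a $(\bm{\theta}_1, \bm{\theta}_2)$-dependent factor and a factor that depends only on $f_{c_1, c_2}$: the factor $S_{\circ}(y)$ appears in the $v = 0$ case, $\int_z^y f_{\circ}(t, y) dt$ in the $(w, v) = (0,1)$ case, and $f_{\circ}(z, y)$ in the $(w, v) = (1,1)$ case. The full log likelihood therefore splits as $\tilde{\mathcal{E}}(\bm{\theta}_1, \bm{\theta}_2)$ plus a term that is constant in $(\bm{\theta}_1, \bm{\theta}_2)$.

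Next, I would apply the weak law of large numbers to $n^{-1} \tilde{\mathcal{E}}(\bm{\theta}_1, \bm{\theta}_2)$ and add and subtract the value at the true parameters $(\bm{\theta}_1^*, \bm{\theta}_2^*)$. Because the censoring factors appear identically in the numerator and denominator of every log-ratio, they cancel cleanly, and the limiting difference reduces to a sum of three Kullback-Leibler-type terms, each bounded above by zero through Jensen's inequality. Consequently, $\tilde{\mathcal{E}}(\bm{\theta}_1, \bm{\theta}_2)$ is asymptotically maximized at $(\bm{\theta}_1^*, \bm{\theta}_2^*)$.

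For consistency, the equality condition in Jensen's inequality must uniquely pin down the true parameters. I would extract this from the $(w, v) = (1,1)$ observations, whose joint density has positive mass by the positivity of all densities: equality forces $f_{\circ}(z, y \mid \bm{x}; \bm{\theta}_1, \bm{\theta}_2) = f_{\circ}(z, y \mid \bm{x}; \bm{\theta}_1^*, \bm{\theta}_2^*)$ for $0 \leq z \leq y$ and $\bm{x} \in \mathcal{X}$. From here, the inversion already used in the proof of \cref{ind2}, namely $f_1(t) = f_{\circ}(t, t)/f_2(0)$ and $f_2(t) = f_{\circ}(0, t)/f_1(0)$ with the corresponding normalization identities, combined with \cref{f_identifiability}, forces $(\bm{\theta}_1, \bm{\theta}_2) = (\bm{\theta}_1^*, \bm{\theta}_2^*)$.

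The main obstacle is the bookkeeping in the second step: one must verify that the censoring factors, which are unknown functions of the nuisance distribution $f_{c_1, c_2}$, truly drop out of the log-ratio, so that no identifiability assumption on $f_{c_1, c_2}$ beyond the standing positivity and independence conditions is required. Once this cancellation is made explicit, the three-case structure adds no new mathematical content beyond careful accounting, and the argument closes just as in \cref{ind2_proof}.
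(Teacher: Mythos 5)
Your proposal is correct and follows exactly the route the paper intends: the published proof is a one-line deferral to the argument in the proof of \cref{ind2}, and your write-up simply makes explicit the steps that deferral relies on (factorization of each of the three densities into a $(\bm{\theta}_1,\bm{\theta}_2)$-part times a censoring-only part, cancellation of the censoring factors in the log-ratio, Jensen's inequality, and identifiability of $f_{\circ}$ via $f_1(t)=f_{\circ}(t,t)/f_2(0)$, $f_2(t)=f_{\circ}(0,t)/f_1(0)$ together with \cref{f_identifiability}). No substantive differences to report.
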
 
\begin{proof} 
  Consider the case $(W, V) = (1, 1)$ in the proof of \Cref{ind2}. 
  The proposition follows immediately from the same argument. 
\end{proof}  
In this setting, the occurrence of an earlier event after the first censoring time is reported, but its timing is not observed. 
When $\Pr(C_2 = 0) = 1$, this setting is equivalent to that described in \Cref{Setting}.

\subsection{Joint Estimation with Censoring} 
\label{JEC}   
Under the following assumptions, we can jointly estimate $(\bm{\theta}_1, \bm{\theta}_2, \bm{\theta}_c)$. 
\begin{assumption} 
\label{c_identifiability}  
If $f_c(t \mid \bm{x}; \bm{\theta}_c) = f_c(t \mid \bm{x}; \bm{\theta}_c^\prime)$ for all $t \in [0, \infty)$ and $\bm{x} \in \mathcal{X}$, then $\bm{\theta}_c = \bm{\theta}_c^\prime$. 
\end{assumption} 
\begin{assumption} 
\label{psi} 
For all $\bm{x} \in \mathcal{X}$, if the following holds: 
\begin{align} 
\Bigl(\int_0^{\infty} \frac{f_1(t \mid \bm{x}; \bm{\theta}_1) S_c(t \mid \bm{x}; \bm{\theta}_c)}{S_c(t \mid \bm{x}; \bm{\theta}_c^\prime)} dt\Bigr) \Bigl(\int_0^{\infty} \frac{f_2(t \mid \bm{x}; \bm{\theta}_2) S_c(t \mid \bm{x}; \bm{\theta}_c)}{S_c(t \mid \bm{x}; \bm{\theta}_c^\prime)} dt\Bigr) = 1 
\Rightarrow f_c(0 \mid \bm{x}; \bm{\theta}_c) = f_c(0 \mid \bm{x}; \bm{\theta}_c^\prime). 
\end{align} 
\end{assumption} 
\begin{proposition} 
\label{ind_all}  
Under \Cref{f_identifiability,c_identifiability,psi}, the estimator of $(\bm{\theta}_1, \bm{\theta}_2, \bm{\theta}_c)$ obtained by maximizing $\mathcal{E}(\bm{\theta}_1, \bm{\theta}_2) + \mathcal{L}(\bm{\theta}_c)$ in the absence of administrative censoring is asymptotically consistent. 
\end{proposition} 
\begin{proof} 
For $0 \leq z \leq y$, the following holds: 
\begin{align}
\frac{\partial}{\partial y} \frac{g_{\scriptscriptstyle\blacksquare}(y)}{f_c(y)} 
&= \frac{\partial}{\partial y} \Bigl(S_1(y) + \int_0^y f_1(t) S_2(y - t) dt\Bigr)& \nonumber \\  
&= \frac{\partial}{\partial y} \Bigl(1 - \int_0^y f_1(t) \bigl(1 - S_2(y - t)\bigr) dt\Bigr)& \nonumber \\  
&= \frac{\partial}{\partial y} \Bigl(1 - \int_0^y dt f_1(t) \int_0^{y - t} ds f_2(s)\Bigr)& \nonumber \\  
&= - \int_0^y f_1(t) f_2(y - t) dt& \nonumber \\  
&= - \frac{1}{S_c(y)} \int_0^y g_{\scriptscriptstyle\square}(t, y) dt.& 
\end{align} 
Therefore, we have 
\begin{align}
&\frac{\partial}{\partial y} \frac{g_{\scriptscriptstyle\blacksquare}(y)}{f_c(y)} + \frac{1}{S_c(y)} \int_0^y g_{\scriptscriptstyle\square}(t, y) dt = 0& \nonumber \\  
&\Rightarrow \frac{1}{f_c(y)} \frac{\partial g_{\scriptscriptstyle\blacksquare}(y)}{\partial y} - \frac{g_{\scriptscriptstyle\blacksquare}(y)}{f_c(y)^2} \frac{\partial f_c(y)}{\partial y} + \frac{1}{S_c(y)} \int_0^y g_{\scriptscriptstyle\square}(t, y) dt = 0& \nonumber \\   
&\Rightarrow - \frac{\partial g_{\scriptscriptstyle\blacksquare}(y)}{\partial y} \Bigl(\frac{\partial S_c(y)}{\partial y}\Bigr)^{-1} + g_{\scriptscriptstyle\blacksquare}(y) \Bigl(\frac{\partial S_c(y)}{\partial y}\Bigr)^{-2} \frac{\partial^2 S_c(y)}{\partial y^2} + \frac{1}{S_c(y)} \int_0^y g_{\scriptscriptstyle\square}(t, y) dt = 0& \nonumber \\   
&\Rightarrow \frac{\partial^2 S_c(y)}{\partial y^2} = \frac{\partial \log g_{\scriptscriptstyle\blacksquare}(y)}{\partial y} \frac{\partial S_c(y)}{\partial y} - \frac{\int_0^y g_{\scriptscriptstyle\square}(t, y) dt}{g_{\scriptscriptstyle\blacksquare}(y)} \frac{1}{S_c(y)} \Bigl(\frac{\partial S_c(y)}{\partial y}\Bigr)^2.& 
\end{align} 
From the Cauchy-Lipschitz uniqueness theorem, $S_c$ is uniquely determined by specifying $f_c(0)$ given $(g_{\scriptscriptstyle\blacksquare}, g_{\scriptscriptstyle\square})$. 
Here, the following holds:  
\begin{align}
f_1(t) = \frac{g_{\scriptscriptstyle\square}(t, t)}{f_2(0) S_c(t)},~~~f_2(t) = \frac{g_{\scriptscriptstyle\square}(0, t)}{f_1(0) S_c(t)}. 
\end{align} 
Since both $f_1$ and $f_2$ are the probability density functions, we have 
\begin{align}
g_{\scriptscriptstyle\square}(0, 0) 
= f_1(0) f_2(0) S_c(0) 
= \Bigl(\int_0^{\infty} \frac{g_{\scriptscriptstyle\square}(0, t)}{S_c(t)} dt\Bigr) \Bigl(\int_0^{\infty} \frac{g_{\scriptscriptstyle\square}(t, t)}{S_c(t)} dt\Bigr). 
\end{align} 
Let $g_{\scriptscriptstyle\square}(z, y; \bm{\theta}_1, \bm{\theta}_2, \bm{\theta}_c) = g_{\scriptscriptstyle\square}(z, y; \bm{\theta}_1^\prime, \bm{\theta}_2^\prime, \bm{\theta}_c^\prime)$ for all $0 \leq z \leq y$.  
Then the following holds: 
\begin{align}
&g_{\scriptscriptstyle\square}(0, 0; \bm{\theta}_1, \bm{\theta}_2, \bm{\theta}_c) = \Bigl(\int_0^{\infty} \frac{g_{\scriptscriptstyle\square}(0, t; \bm{\theta}_1, \bm{\theta}_2, \bm{\theta}_c)}{S_c(t; \bm{\theta}_c^\prime)} dt\Bigr) \Bigl(\int_0^{\infty} \frac{g_{\scriptscriptstyle\square}(t, t; \bm{\theta}_1, \bm{\theta}_2, \bm{\theta}_c)}{S_c(t; \bm{\theta}_c^\prime)} dt\Bigr)& \nonumber \\ 
&\Rightarrow f_1(0; \bm{\theta}_1) f_2(0; \bm{\theta}_2) S_c(0; \bm{\theta}_c) = \Bigl(\int_0^{\infty} \frac{f_1(t; \bm{\theta}_1) f_2(0; \bm{\theta}_2) S_c(t; \bm{\theta}_c)}{S_c(t; \bm{\theta}_c^\prime)} dt\Bigr) \Bigl(\int_0^{\infty} \frac{f_1(0; \bm{\theta}_1) f_2(t; \bm{\theta}_2) S_c(t; \bm{\theta}_c)}{S_c(t; \bm{\theta}_c^\prime)} dt\Bigr)& \nonumber \\  
&\Rightarrow 1 = \Bigl(\int_0^{\infty} \frac{f_1(t; \bm{\theta}_1) S_c(t; \bm{\theta}_c)}{S_c(t; \bm{\theta}_c^\prime)} dt\Bigr) \Bigl(\int_0^{\infty} \frac{f_2(t; \bm{\theta}_2) S_c(t; \bm{\theta}_c)}{S_c(t; \bm{\theta}_c^\prime)} dt\Bigr).& 
\end{align} 
Accordingly, by \Cref{psi}, $f_c(0)$ is uniquely determined given $g_{\scriptscriptstyle\square}$. 
Hence, $S_c$ is uniquely determined given $(g_{\scriptscriptstyle\blacksquare}, g_{\scriptscriptstyle\square})$. 
Additionally, the following holds: 
\begin{align}
f_1(t) = \frac{g_{\scriptscriptstyle\square}(t, t)}{S_c(t)} \Bigl(\int_0^{\infty} \frac{g_{\scriptscriptstyle\square}(0, s)}{S_c(s)} ds\Bigr)^{-1},
~~~f_2(t) = \frac{g_{\scriptscriptstyle\square}(0, t)}{S_c(t)} \Bigl(\int_0^{\infty} \frac{g_{\scriptscriptstyle\square}(s, s)}{S_c(s)} ds\Bigr)^{-1}. 
\end{align} 
Therefore, applying \Cref{f_identifiability,c_identifiability,psi}, if $g_{\scriptscriptstyle\blacksquare}(y \mid \bm{x}; \bm{\theta}_1, \bm{\theta}_2, \bm{\theta}_c) = g_{\scriptscriptstyle\blacksquare}(y \mid \bm{x}; \bm{\theta}_1^\prime, \bm{\theta}_2^\prime, \bm{\theta}_c^\prime)$ and $g_{\scriptscriptstyle\square}(z, y \mid \bm{x}; \bm{\theta}_1, \bm{\theta}_2, \bm{\theta}_c) = g_{\scriptscriptstyle\square}(z, y \mid \bm{x}; \bm{\theta}_1^\prime, \bm{\theta}_2^\prime, \bm{\theta}_c^\prime)$ for all $0 \leq z \leq y$ and $\bm{x} \in \mathcal{X}$, 
then $(\bm{\theta}_1, \bm{\theta}_2, \bm{\theta}_c) = (\bm{\theta}_1^\prime, \bm{\theta}_2^\prime, \bm{\theta}_c^\prime)$. 
By \Cref{mixture}, the weak law of large numbers, and Jensen's inequality, we have   
\begin{align} 
  \frac{1}{n} \bigl(\mathcal{E}(\bm{\theta}_1, \bm{\theta}_2) + \mathcal{L}_c(\bm{\theta}_c)\bigr) 
  &= \frac{1}{n} \sum_{i = 1}^n \bigl((1 - v_i) \log g_{\scriptscriptstyle\blacksquare}(y_i \mid \bm{x}_i; \bm{\theta}_1, \bm{\theta}_2, \bm{\theta}_c) + v_i \log g_{\scriptscriptstyle\square}(z_i, y_i \mid \bm{x}_i; \bm{\theta}_1, \bm{\theta}_2, \bm{\theta}_c)\bigr)&  \nonumber \\ 
  &\xrightarrow{p} \int_{\mathcal{X}} d\bm{x} \int_0^{\infty} dy \rho(\bm{x}) g_{\scriptscriptstyle\blacksquare}(y \mid \bm{x}; \bm{\theta}_1^*, \bm{\theta}_2^*, \bm{\theta}_c^*) \log g_{\scriptscriptstyle\blacksquare}(y \mid \bm{x}; \bm{\theta}_1, \bm{\theta}_2, \bm{\theta}_c)&  \nonumber \\  
  &~~~~+ \int_{\mathcal{X}} d\bm{x} \int_0^{\infty} dy \int_0^y dz \rho(\bm{x}) g_{\scriptscriptstyle\square}(z, y \mid \bm{x}; \bm{\theta}_1^*, \bm{\theta}_2^*, \bm{\theta}_c^*) \log g_{\scriptscriptstyle\square}(z, y \mid \bm{x}; \bm{\theta}_1, \bm{\theta}_2, \bm{\theta}_c)&  \nonumber \\  
  &= \int_{\mathcal{X}} d\bm{x} \int_0^{\infty} dy \rho(\bm{x}) g_{\scriptscriptstyle\blacksquare}(y \mid \bm{x}; \bm{\theta}_1^*, \bm{\theta}_2^*, \bm{\theta}_c^*) \log \frac{g_{\scriptscriptstyle\blacksquare}(y \mid \bm{x}; \bm{\theta}_1, \bm{\theta}_2, \bm{\theta}_c)}{g_{\scriptscriptstyle\blacksquare}(y \mid \bm{x}; \bm{\theta}_1^*, \bm{\theta}_2^*, \bm{\theta}_c^*)}&  \nonumber \\  
  &~~~~+ \int_{\mathcal{X}} d\bm{x} \int_0^{\infty} dy \int_0^y dz \rho(\bm{x}) g_{\scriptscriptstyle\square}(z, y \mid \bm{x}; \bm{\theta}_1^*, \bm{\theta}_2^*, \bm{\theta}_c^*) \log \frac{g_{\scriptscriptstyle\square}(z, y \mid \bm{x}; \bm{\theta}_1, \bm{\theta}_2, \bm{\theta}_c)}{g_{\scriptscriptstyle\square}(z, y \mid \bm{x}; \bm{\theta}_1^*, \bm{\theta}_2^*, \bm{\theta}_c^*)}&  \nonumber \\  
  &~~~~+ \int_{\mathcal{X}} d\bm{x} \int_0^{\infty} dy \rho(\bm{x}) g_{\scriptscriptstyle\blacksquare}(y \mid \bm{x}; \bm{\theta}_1^*, \bm{\theta}_2^*, \bm{\theta}_c^*) \log g_{\scriptscriptstyle\blacksquare}(y \mid \bm{x}; \bm{\theta}_1^*, \bm{\theta}_2^*, \bm{\theta}_c^*)&  \nonumber \\  
  &~~~~+ \int_{\mathcal{X}} d\bm{x} \int_0^{\infty} dy \int_0^y dz \rho(\bm{x}) g_{\scriptscriptstyle\square}(z, y \mid \bm{x}; \bm{\theta}_1^*, \bm{\theta}_2^*, \bm{\theta}_c^*) \log g_{\scriptscriptstyle\square}(z, y \mid \bm{x}; \bm{\theta}_1^*, \bm{\theta}_2^*, \bm{\theta}_c^*)&  \nonumber \\  
  &\leq \int_{\mathcal{X}} d\bm{x} \int_0^{\infty} dy \rho(\bm{x}) g_{\scriptscriptstyle\blacksquare}(y \mid \bm{x}; \bm{\theta}_1^*, \bm{\theta}_2^*, \bm{\theta}_c^*) \log g_{\scriptscriptstyle\blacksquare}(y \mid \bm{x}; \bm{\theta}_1^*, \bm{\theta}_2^*, \bm{\theta}_c^*)&  \nonumber \\  
  &~~~~+ \int_{\mathcal{X}} d\bm{x} \int_0^{\infty} dy \int_0^y dz \rho(\bm{x}) g_{\scriptscriptstyle\square}(z, y \mid \bm{x}; \bm{\theta}_1^*, \bm{\theta}_2^*, \bm{\theta}_c^*) \log g_{\scriptscriptstyle\square}(z, y \mid \bm{x}; \bm{\theta}_1^*, \bm{\theta}_2^*, \bm{\theta}_c^*).&  
\end{align} 
Consequently, applying the equality condition in Jensen's inequality, the estimator of $(\bm{\theta}_1, \bm{\theta}_2, \bm{\theta}_c)$ obtained by maximizing $\mathcal{E}(\bm{\theta}_1, \bm{\theta}_2) + \mathcal{L}_c(\bm{\theta}_c)$ is asymptotically consistent. 
\end{proof}

\section{Supplementary Theoretical Results} 

\subsection{Restricted Class of Hazard Functions}
\label{example_hazard} 
Let $f(t; C) = C$ if $t \leq \tau_0$; otherwise, let $f(t; C) = (1 - C \tau_0) f_0(t)$,  
where $\tau_0$ is a known parameter satisfying $\tau \leq \tau_0$, $C$ is an unknown parameter satisfying $0 < C \leq \tau_0^{-1}$, and $f_0(t)$ is a known positive function such that $\int_{\tau_0}^\infty f_0(t) dt = 1$. 
A single function from this class is identifiable on $[0,\tau]$, since $f(t;C) = C$ on this interval. 
In contrast, the product of two functions from this class is not identifiable on $[0,\tau]$: only the product $C_1 C_2$ is observed, so different pairs satisfying $C_1 C_2 = C_1^\prime C_2^\prime$ yield the same product. 
For example, whenever the admissibility constraints are satisfied, $(C_1, C_2) = (1, 1)$ and $(C_1^\prime, C_2^\prime) = (0.5, 2)$ yield the same product on $[0, \tau]$. 
Therefore, this case does not satisfy \Cref{known0}.

\subsection{Effect of Propagated First-stage Error} 
\label{variance_appendix} 
A Taylor expansion yields the following result that explicitly accounts for the propagated error. 
\begin{proposition} 
\label{two_stage_variance} 
Let $(\bm{\theta}_{1,2}^*, \bm{\theta}_{2,2}^*)$ denote the target-domain true values of $(\bm{\theta}_{1,2}, \bm{\theta}_{2,2})$. 
Then, under \Cref{f_identifiability,transfer_condition}, the covariance matrix of $(\hat{\bm{\theta}}_{1,2}, \hat{\bm{\theta}}_{2,2}) - (\bm{\theta}_{1,2}^*, \bm{\theta}_{2,2}^*)$ admits the expansion   
\begin{align} 
\label{variance} 
n_t^{-1} I_1^{-1} + n_s^{-1} I_1^{-1} I_2 I_3^{-1} I_2^\top I_1^{-1} + o(n_t^{-1} + n_s^{-1}),  
\end{align} 
where $I_1$ is the Fisher information matrix for the remaining components $(\bm{\theta}_{1,2}, \bm{\theta}_{2,2})$ in the target-domain objective $n_t^{-1} \mathcal{E}_t(\bm{\theta}_1,\bm{\theta}_2)$, 
$I_2$ is the corresponding cross-information matrix between the remaining components $(\bm{\theta}_{1,2}, \bm{\theta}_{2,2})$ and the transferable components $(\bm{\theta}_{1,1}, \bm{\theta}_{2,1})$ in the target-domain objective, 
and $I_3$ is the Schur-complement information matrix for the transferable components $(\bm{\theta}_{1,1}, \bm{\theta}_{2,1})$ in the source-domain objective $n_s^{-1} \mathcal{E}_s(\bm{\theta}_1,\bm{\theta}_2)$. 
All matrices are evaluated at the true parameter values. 
\end{proposition} 
\begin{proof} 
By the standard asymptotic expansion of the first-stage estimator based on the source-domain objective, we have 
\begin{align}
\operatorname{Cov}\bigl((\hat{\bm{\theta}}_{1,1}, \hat{\bm{\theta}}_{2,1}) - (\bm{\theta}_{1,1}^*, \bm{\theta}_{2,1}^*)\bigr) = n_s^{-1} I_3^{-1} + o(n_s^{-1}). 
\end{align}
Next, the second-stage estimator satisfies 
\begin{align} 
0 
&= n_t^{-1} \nabla_{(\bm{\theta}_{1,2}, \bm{\theta}_{2,2})} \mathcal{E}_t\bigl((\hat{\bm{\theta}}_{1,1}, \hat{\bm{\theta}}_{2,1}), (\bm{\theta}_{1,2}, \bm{\theta}_{2,2})\bigr) |_{(\bm{\theta}_{1,2}, \bm{\theta}_{2,2}) = (\hat{\bm{\theta}}_{1,2}, \hat{\bm{\theta}}_{2,2})}& \nonumber \\ 
&= n_t^{-1} \nabla_{(\bm{\theta}_{1,2}, \bm{\theta}_{2,2})} \mathcal{E}_t\bigl((\bm{\theta}_{1,1}^*, \bm{\theta}_{2,1}^*), (\bm{\theta}_{1,2}, \bm{\theta}_{2,2})\bigr) |_{(\bm{\theta}_{1,2}, \bm{\theta}_{2,2}) = (\bm{\theta}_{1,2}^*, \bm{\theta}_{2,2}^*)}& \nonumber \\ 
&~~~~~- I_1 \bigl((\hat{\bm{\theta}}_{1,2}, \hat{\bm{\theta}}_{2,2}) - (\bm{\theta}_{1,2}^*, \bm{\theta}_{2,2}^*)\bigr) - I_2 \bigl((\hat{\bm{\theta}}_{1,1}, \hat{\bm{\theta}}_{2,1}) - (\bm{\theta}_{1,1}^*, \bm{\theta}_{2,1}^*)\bigr) + o_p(n_t^{-1/2} + n_s^{-1/2}).& 
\end{align} 
Therefore, we have 
\begin{align}
\bigl((\hat{\bm{\theta}}_{1,2}, \hat{\bm{\theta}}_{2,2}) - (\bm{\theta}_{1,2}^*, \bm{\theta}_{2,2}^*)\bigr) 
&= n_t^{-1} I_1^{-1} \nabla_{(\bm{\theta}_{1,2}, \bm{\theta}_{2,2})} \mathcal{E}_t\bigl((\bm{\theta}_{1,1}^*, \bm{\theta}_{2,1}^*), (\bm{\theta}_{1,2}, \bm{\theta}_{2,2})\bigr) |_{(\bm{\theta}_{1,2}, \bm{\theta}_{2,2}) = (\bm{\theta}_{1,2}^*, \bm{\theta}_{2,2}^*)}& \nonumber \\  
&~~~~~- I_1^{-1} I_2 \bigl((\hat{\bm{\theta}}_{1,1}, \hat{\bm{\theta}}_{2,1}) - (\bm{\theta}_{1,1}^*, \bm{\theta}_{2,1}^*)\bigr) + o_p(n_t^{-1/2} + n_s^{-1/2}).&  
\end{align}
Consequently, we obtain 
\begin{align}
\operatorname{Cov}\bigl((\hat{\bm{\theta}}_{1,2}, \hat{\bm{\theta}}_{2,2}) - (\bm{\theta}_{1,2}^*, \bm{\theta}_{2,2}^*)\bigr) 
&= n_t^{-2} I_1^{-1} \operatorname{Cov}(\nabla_{(\bm{\theta}_{1,2}, \bm{\theta}_{2,2})} \mathcal{E}_t\bigl((\bm{\theta}_{1,1}^*, \bm{\theta}_{2,1}^*), (\bm{\theta}_{1,2}, \bm{\theta}_{2,2})\bigr) |_{(\bm{\theta}_{1,2}, \bm{\theta}_{2,2}) = (\bm{\theta}_{1,2}^*, \bm{\theta}_{2,2}^*)}) I_1^{-1}&  \nonumber  \\  
&~~~~~+ I_1^{-1} I_2 \operatorname{Cov}\bigl((\hat{\bm{\theta}}_{1,1}, \hat{\bm{\theta}}_{2,1}) - (\bm{\theta}_{1,1}^*, \bm{\theta}_{2,1}^*)\bigr) I_2^\top I_1^{-1} + o(n_t^{-1} + n_s^{-1})&  \nonumber  \\ 
&= n_t^{-1} I_1^{-1} + n_s^{-1} I_1^{-1} I_2 I_3^{-1} I_2^\top I_1^{-1} + o(n_t^{-1} + n_s^{-1}).&  
\end{align} 
\end{proof}

The second term in \Cref{variance} captures the effect of first-stage estimation error propagated through the transfer-learning procedure. 
As the source-domain sample size $n_s$ increases, this effect becomes negligible.

\subsection{Bias Induced by Ignoring Reporting Delays} 
\label{bias} 
The second term in $S_{\scriptscriptstyle\blacksquare}(y)$ captures the probability that the event occurrence is unreported. 
The following proposition shows that omitting this term induces bias in the estimation of $\bm{\gamma}$. 
\begin{assumption} 
\label{componentwise} 
For each $1 \leq i \leq m$, the gradient $\frac{\partial}{\partial \bm{\gamma}} \phi(\bm{x}_i; \bm{\gamma})|_{\bm{\gamma} = \hat{\bm{\gamma}}}$ is strictly positive componentwise. 
\end{assumption} 
\begin{proposition} 
Under \Cref{componentwise}, $\bm{\zeta}(\hat{\bm{\gamma}})$ is strictly negative componentwise. 
\end{proposition} 
\begin{proof} 
As in \Cref{Egrad}, the gradient of $\mathcal{E}_t((\hat{\bm{\alpha}}, \bm{\gamma}), \hat{\bm{\theta}}_2)$ can be represented as 
\begin{align} 
\bm{\zeta}(\bm{\gamma}) + \sum_{i = 1}^m \frac{\frac{\partial \phi(\bm{x}_i; \bm{\gamma})}{\partial \bm{\gamma}} \int_0^{y_i} \bigl(1 + \phi(\bm{x}_i; \bm{\gamma}) \int_t^{y_i} h_b(s \mid \bm{x}_i) ds\bigr) h_b(t \mid \bm{x}_i) \exp \bigl(\int_t^{y_i} h_b(s \mid \bm{x}_i) \phi(\bm{x}_i; \bm{\gamma}) ds\bigr) S_2(y_i - t \mid \bm{x}_i) dt}{1 + \int_0^{y_i} h_b(t \mid \bm{x}_i) \phi(\bm{x}_i; \bm{\gamma}) \exp \bigl(\int_t^{y_i} h_b(s \mid \bm{x}_i) \phi(\bm{x}_i; \bm{\gamma}) ds\bigr) S_2(y_i - t \mid \bm{x}_i) dt}.   
\end{align}  
Under \Cref{componentwise}, the last factor on the right-hand side is strictly positive componentwise.  
Consequently, $\bm{\zeta}(\hat{\bm{\gamma}})$ is strictly negative componentwise. 
\end{proof} 
\Cref{componentwise} holds, for example, when $\phi(\bm{x}; \bm{\gamma}) = \exp(\bm{\gamma}^\top \bm{x})$ and $\bm{x}$ are strictly positive componentwise.

\section{Supplementary Experiments} 
\label{SED}

\subsection{Computational Environment} 
\label{CE} 
We used a 64-bit Windows machine with the Intel Core i9-9900K @ $3.60$ GHz processor and $64$ GB of RAM. 
All code was written in Python version 3.7.3.

\subsection{Data Preprocessing} 
\label{DP} 
For the variable \textit{rx} in the \textit{Colon} dataset, we encoded the levels \textit{Obs}, \textit{Lev}, and \textit{Lev+5FU} as 0, 1, and 2, respectively. 
For the variable \textit{size} in the \textit{Rotterdam} dataset, we encoded the levels $<=20$, $20-50$, and $>50$ as 0, 1, and 2, respectively. 
All variables were standardized. 
In the second stage of the transfer-learning procedure, we included an intercept term in the covariate vector.

\subsection{Sensitivity to Reporting Hazard} 
\label{AR} 
We examine how the parameter estimation procedure behaves as the reporting hazard rate varies under a controlled toy setting. 

\textbf{Setting.} 
For each of the source and target domains, we generated datasets of size $n = 1000$ with covariate dimension $d = 10$ under the underlying survival model with the following parameters. 
The true value of $(\alpha_1, \alpha_2, \alpha_3, \beta_1, \gamma_1)$ was set to $(0.1, 0.2, 0.3, 1, 2)$. 
For each $2 \leq i \leq d$, $\beta_i$ and $\gamma_i$ were independently drawn from the uniform distribution on $[-1,1]$. 
To assess the impact of reporting delays, we considered three true values of $\lambda$: $0.5$, $1$, and $5$. 
The censoring time was generated from a distribution with constant hazard rate $1$. 
The administrative censoring time in the target domain was set to the first breakpoint of the piecewise baseline hazard, $\tau = 0.5$. 
For both the source and target domains, each covariate vector was generated uniformly over $[- 1, 1]^d$. 

\textbf{Result.} 
\Cref{result} shows that the parameter estimates obtained by \textit{Ours} are close to the true values. 
In contrast, \textit{Standard} substantially underestimates the parameters across a wide range of $\lambda_*$. 
These results suggest that accounting for reporting delays mitigates the underestimation arising from ignoring them. 
As in \Cref{MLE}, the approximate estimating equation $\tilde{\bm{\zeta}}(\bm{\gamma}) = \bm{0}$ becomes unreliable when the true later hazard $\lambda_*$ is too small. 

\begin{table*}[h]
\caption{
Parameter estimation on the toy model. 
The table reports the mean and standard deviation of estimated parameters over $100$ trials. 
Cells with a light gray background indicate that the relative error of the mean from the true value exceeds $10$\%, whereas cells with a darker gray background indicate an error exceeding $25$\%. 
The values in parentheses denote the true parameter values. 
For \textit{Ours}, the estimates of $\lambda$ are $0.526\ (0.127)$, $1.012\ (0.173)$, and $5.036\ (0.440)$ for $\lambda_* = 0.5$, $1$, and $5$, respectively. 
} 
\label{result}
\begin{center}
\begin{tabular}{cccccccc} 
\toprule
\multirow{2.5}{*}{Method} & \multirow{2.5}{*}{$\lambda_*$} & \multicolumn{5}{c}{Parameter estimates} \\ 
\cmidrule(lr){3-7} & & $\alpha_1$~($0.1$) & $\alpha_2$~($0.2$) & $\alpha_3$~($0.3$) & $\beta_1$~($1$) & $\gamma_1$~($2$) \\ 
\midrule 
Standard & $0.5$ & $\cellcolor{gray!30}{0.037\ (0.009)}$ & $\cellcolor{gray!30}{0.061\ (0.015)}$ & $\cellcolor{gray!30}{0.067\ (0.016)}$ & $\cellcolor{gray!15}{0.797\ (0.221)}$ & $\cellcolor{gray!30}{0.808\ (0.309)}$ \\ 
Standard & $1$ & $\cellcolor{gray!30}{0.054\ (0.011)}$ & $\cellcolor{gray!30}{0.096\ (0.017)}$ & $\cellcolor{gray!30}{0.116\ (0.023)}$ & $\cellcolor{gray!15}{0.856\ (0.204)}$ & $\cellcolor{gray!30}{0.995\ (0.291)}$ \\ 
Standard & $5$ & $\cellcolor{gray!15}{0.081\ (0.015)}$ & $\cellcolor{gray!15}{0.165\ (0.026)}$ & $\cellcolor{gray!15}{0.249\ (0.035)}$ & $0.994\ (0.163)$ & $\cellcolor{gray!15}{1.728\ (0.293)}$ \\ 
\midrule 
Ours & $0.5$ & $\cellcolor{gray!15}{0.089\ (0.031)}$ & $0.185\ (0.059)$ & $0.319\ (0.112)$ & $1.064\ (0.283)$ & $\cellcolor{gray!30}{1.494\ (0.574)}$ \\ 
Ours & $1$ & $0.094\ (0.023)$ & $0.197\ (0.043)$ & $0.307\ (0.076)$ & $1.024\ (0.259)$ & $\cellcolor{gray!15}{1.604\ (0.421)}$ \\ 
Ours & $5$ & $0.096\ (0.017)$ & $0.198\ (0.032)$ & $0.307\ (0.045)$ & $1.017\ (0.162)$ & $1.956\ (0.315)$ \\ 
\midrule 
Oracle & --- & $0.098\ (0.016)$ & $0.197\ (0.029)$ & $0.301\ (0.036)$ & $1.017\ (0.124)$ & $2.007\ (0.224)$ \\ 
\bottomrule 
\end{tabular}
\end{center} 
\end{table*}

\section{Application to Insurance Risk Evaluation} 
\label{scenarios} 
Right-censored reporting delays are also common in insurance: the accident occurrence time for at-risk insured policies becomes observable only when the accident is reported. 
In such settings, a large collection of historical data may be available, from which one can select a source cohort whose time-dependent baseline hazard and delay distribution are expected to match their counterparts in the target cohort. 
Consider a target portfolio that includes newly underwritten risk profiles, such as policyholders with medical histories not sufficiently represented in historical data. 
The temporal pattern of accident occurrence may remain comparable, since the underlying post-enrollment selection mechanism operates similarly across cohorts, leading to an initially low occurrence rate that gradually increases over time. 
Moreover, the reporting mechanism is often driven primarily by basic policyholder attributes, such as sex, age, and region, and by operational workflows, rather than by such newly introduced risk-profile characteristics.

\section{Broader Impacts}
\label{broader} 
This work provides a methodological contribution to survival analysis under reporting delays. 
It may support risk evaluation in applications such as biostatistics and insurance, where timely risk evaluation is important but event information may be delayed. 
Careful validation of the modeling assumptions is important, since violations of these assumptions may lead to inappropriate decision making.

\end{document}